\documentclass[letterpaper]{article} % DO NOT CHANGE THIS
\usepackage{aaai23_arxiv}  % DO NOT CHANGE THIS
\usepackage{times}  % DO NOT CHANGE THIS
\usepackage{helvet}  % DO NOT CHANGE THIS
\usepackage{courier}  % DO NOT CHANGE THIS
\usepackage[hyphens]{url}  % DO NOT CHANGE THIS
\usepackage{graphicx} % DO NOT CHANGE THIS
\urlstyle{rm} % DO NOT CHANGE THIS
  % DO NOT CHANGE THIS
\usepackage{natbib}  % DO NOT CHANGE THIS AND DO NOT ADD ANY OPTIONS TO IT
\usepackage{caption} % DO NOT CHANGE THIS AND DO NOT ADD ANY OPTIONS TO IT
\frenchspacing  % DO NOT CHANGE THIS
\setlength{\pdfpagewidth}{8.5in}  % DO NOT CHANGE THIS
\setlength{\pdfpageheight}{11in}  % DO NOT CHANGE THIS
%
% These are recommended to typeset algorithms but not required. See the subsubsection on algorithms. Remove them if you don't have algorithms in your paper.
\usepackage{algorithm}
% \usepackage{algorithmic}

%
% These are are recommended to typeset listings but not required. See the subsubsection on listing. Remove this block if you don't have listings in your paper.
\usepackage{newfloat}
\usepackage{listings}
\DeclareCaptionStyle{ruled}{labelfont=normalfont,labelsep=colon,strut=off} % DO NOT CHANGE THIS
\lstset{%
	basicstyle={\footnotesize\ttfamily},% footnotesize acceptable for monospace
	numbers=left,numberstyle=\footnotesize,xleftmargin=2em,% show line numbers, remove this entire line if you don't want the numbers.
	aboveskip=0pt,belowskip=0pt,%
	showstringspaces=false,tabsize=2,breaklines=true}
\floatstyle{ruled}
\newfloat{listing}{tb}{lst}{}
\floatname{listing}{Listing}
%
% Keep the \pdfinfo as shown here. There's no need
% for you to add the /Title and /Author tags.
% \pdfinfo{
% /TemplateVersion (2023.1)
% }

%
% Keep the \pdfinfo as shown here. There's no need
% for you to add the /Title and /Author tags.
\pdfinfo{
/Title (Implicit Bilevel Optimization: Differentiating through Bilevel Optimization Programming)
/Author (Francesco Alesiani)
/TemplateVersion (2023.1)
}

% DISALLOWED PACKAGES
% \usepackage{authblk} -- This package is specifically forbidden
% \usepackage{balance} -- This package is specifically forbidden
% \usepackage{color (if used in text)
% \usepackage{CJK} -- This package is specifically forbidden
% \usepackage{float} -- This package is specifically forbidden
% \usepackage{flushend} -- This package is specifically forbidden
% \usepackage{fontenc} -- This package is specifically forbidden
% \usepackage{fullpage} -- This package is specifically forbidden
% \usepackage{geometry} -- This package is specifically forbidden
% \usepackage{grffile} -- This package is specifically forbidden
% \usepackage{hyperref} -- This package is specifically forbidden
% \usepackage{navigator} -- This package is specifically forbidden
% (or any other package that embeds links such as navigator or hyperref)
% \indentfirst} -- This package is specifically forbidden
% \layout} -- This package is specifically forbidden
% \multicol} -- This package is specifically forbidden
% \nameref} -- This package is specifically forbidden
% \usepackage{savetrees} -- This package is specifically forbidden
% \usepackage{setspace} -- This package is specifically forbidden
% \usepackage{stfloats} -- This package is specifically forbidden
% \usepackage{tabu} -- This package is specifically forbidden
% \usepackage{titlesec} -- This package is specifically forbidden
% \usepackage{tocbibind} -- This package is specifically forbidden
% \usepackage{ulem} -- This package is specifically forbidden
% \usepackage{wrapfig} -- This package is specifically forbidden
% DISALLOWED COMMANDS

\nocopyright 
% -- Your paper will not be published if you use this command

% \addtolength -- This command may not be used
% \balance -- This command may not be used
% \baselinestretch -- Your paper will not be published if you use this command
% \clearpage -- No page breaks of any kind may be used for the final version of your paper
% \columnsep -- This command may not be used
% \newpage -- No page breaks of any kind may be used for the final version of your paper
% \pagebreak -- No page breaks of any kind may be used for the final version of your paperr
% \pagestyle -- This command may not be used
% \tiny -- This is not an acceptable font size.
% \vspace{- -- No negative value may be used in proximity of a caption, figure, table, section, subsection, subsubsection, or reference
% \vskip{- -- No negative value may be used to alter spacing above or below a caption, figure, table, section, subsection, subsubsection, or reference

\setcounter{secnumdepth}{2} %May be changed to 1 or 2 if section numbers are desired.

% The file aaai22.sty is the style file for AAAI Press
% proceedings, working notes, and technical reports.
%

% Title

% Your title must be in mixed case, not sentence case.
% That means all verbs (including short verbs like be, is, using,and go),
% nouns, adverbs, adjectives should be capitalized, including both words in hyphenated terms, while
% articles, conjunctions, and prepositions are lower case unless they
% directly follow a colon or long dash

\title{Implicit Bilevel Optimization: Differentiating through Bilevel Optimization Programming}

\iffalse
\title{AAAI Press Formatting Instructions \\for Authors Using \LaTeX{} --- A Guide}
\author{
    %Authors
    % All authors must be in the same font size and format.
    Written by AAAI Press Staff\textsuperscript{\rm 1}\thanks{With help from the AAAI Publications Committee.}\\
    AAAI Style Contributions by Pater Patel Schneider,
    Sunil Issar,\\
    J. Scott Penberthy,
    George Ferguson,
    Hans Guesgen,
    Francisco Cruz\equalcontrib,
    Marc Pujol-Gonzalez\equalcontrib
}
\affiliations{
    %Afiliations
    \textsuperscript{\rm 1}Association for the Advancement of Artificial Intelligence\\
    % If you have multiple authors and multiple affiliations
    % use superscripts in text and roman font to identify them.
    % For example,

    % Sunil Issar, \textsuperscript{\rm 2}
    % J. Scott Penberthy, \textsuperscript{\rm 3}
    % George Ferguson,\textsuperscript{\rm 4}
    % Hans Guesgen, \textsuperscript{\rm 5}.
    % Note that the comma should be placed BEFORE the superscript for optimum readability

    2275 East Bayshore Road, Suite 160\\
    Palo Alto, California 94303\\
    % email address must be in roman text type, not monospace or sans serif
    publications22@aaai.org
%
% See more examples next
}
\fi
%Example, Single Author, ->> remove \iffalse,\fi and place them surrounding AAAI title to use it
\iffalse
\title{My Publication Title --- Single Author}
\author {
    Author Name
}
\affiliations{
    Affiliation\\
    Affiliation Line 2\\
    name@example.com
}
\fi

\iffalse
%Example, Multiple Authors, ->> remove \iffalse,\fi and place them surrounding AAAI title to use it
\title{My Publication Title --- Multiple Authors}
\author {
    % Authors
    First Author Name,\textsuperscript{\rm 1}
    Second Author Name, \textsuperscript{\rm 2}
    Third Author Name \textsuperscript{\rm 1}
}
\affiliations {
    % Affiliations
    \textsuperscript{\rm 1} Affiliation 1\\
    \textsuperscript{\rm 2} Affiliation 2\\
    firstAuthor@affiliation1.com, secondAuthor@affilation2.com, thirdAuthor@affiliation1.com
}
\fi

% REMOVE THIS: bibentry
% This is only needed to show inline citations in the guidelines document. You should not need it and can safely delete it.
\usepackage{bibentry}
% END REMOVE bibentry

%%%%%%%%%% Francesco (start)

\usepackage{amsmath}
\usepackage{amsfonts}
\usepackage{amssymb}
\usepackage{amsthm}
\usepackage{xcolor}
\usepackage{mathrsfs}
\usepackage{mathtools}
\usepackage{comment}

\usepackage{hyperref} % need to include before aaai style
\usepackage{subfigure}

\usepackage{widetext}
\usepackage{gensymb}
\usepackage{amsmath}
\usepackage{amsmath,amsfonts,amssymb}
\usepackage{graphicx}
\usepackage{bm}
\usepackage{algorithm,algpseudocode,float}

\usepackage{threeparttable}
\usepackage{multirow}
\usepackage{booktabs}
\usepackage{tablefootnote}
\usepackage{array}
\usepackage{caption}
%\usepackage{subfigure}
%\usepackage{subcaption}
%\usepackage{hyperref}
%\captionsetup[table]{position=bottom}

\newtheorem{thm}{Theorem}

\newcommand{\R}{\mathbb{R}}

\usepackage{mathtools}

\DeclareMathOperator{\E}{\mathbb{E}}

\usepackage{mathtools}

\newcommand{\dd}[1]{\mathrm{d}#1}

\usepackage{wrapfig}
\usepackage{tikz}
\usetikzlibrary{arrows.meta,positioning}

% no optional argument here, please!

% \newcommand{\tens}[1]{%
%   \mathbin{\mathop{\otimes}\displaylimits_{#1}}%
% }

\usepackage{adjustbox}
\usepackage{array}

%\newcolumntype{R}[2]{%
%	>{\adjustbox{angle=#1,lap=\width-(#2)}\bgroup}%
%	l%
%	<{\egroup}%
%}

%https://tex.stackexchange.com/questions/394835/multiple-equations-in-one-line-with-proper-referencing
\usepackage{enumitem}
\newlist{eqlist}{enumerate*}{1}
\setlist[eqlist]{itemjoin=\quad,mode=unboxed,label=(\roman*),ref=\theequation(\roman*)}

% https://tex.stackexchange.com/questions/336000/removing-whitespace-before-the-itemize-symbol
\usepackage{enumitem}% http://ctan.org/pkg/enumitem

% https://tex.stackexchange.com/questions/23711/strikethrough-text
\usepackage[normalem]{ulem}

% https://tex.stackexchange.com/questions/456020/how-do-i-fix-error-i-do-not-know-the-key-tikz-show-background-rectangle
\usetikzlibrary{backgrounds}

% https://tex.stackexchange.com/questions/28736/footnotes-on-the-same-line
\usepackage[para]{footmisc}

% see if the table* goes at the bottom
\usepackage{dblfloatfix}

% \raggedbottom
\usepackage{enumitem}
\setlist{nolistsep}
%%%%%%%% Francesco (end)

% Francesco (start)
% to mark the changes 
\makeatletter
\usepackage{comment}
\let\wfs@comment@comment\comment
\let\comment\@undefined
\usepackage{changes}
\let\wfs@changes@comment\comment
\let\comment\@undefined
\newcommand\comment{%
    \ifthenelse{\equal{\@currenvir}{comment}}
    {\wfs@comment@comment}
    {\wfs@changes@comment}%
}
\makeatother

\usepackage{xspace}
\newcommand{\bigrad}{\textsc{BiGrad}\@\xspace}
% Francesco (end)

%% Choose your variant of English; be consistent
\usepackage[american]{babel}
% \usepackage[british]{babel}

%% Some suggested packages, as needed:

% \usepackage{natbib} % has a nice set of citation styles and commands
% \bibliographystyle{plainnat}

% \renewcommand{\bibsection}{\subsubsection*{References}}
    
\usepackage{mathtools} % amsmath with fixes and additions
\usepackage{booktabs} % commands to create good-looking tables
\usepackage{tikz} % nice language for creating drawings and diagrams

%% Provided macros
% \smaller: Because the class footnote size is essentially LaTeX's \small,
%           redefining \footnotesize, we provide the original \footnotesize
%           using this macro.
%           (Use only sparingly, e.g., in drawings, as it is quite small.)

%% Self-defined macros
 % just an example

% \title{\bigrad: Differentiating through Bilevel Optimization Programming}

% The standard author block has changed for UAI 2021 to provide
% more space for long author lists and allow for complex affiliations
%
% All author information is authomatically removed by the class for the
% anonymous submission version of your paper, so you can already add your
% information below.
%

% \author{} % LEAVE BLANK FOR ORIGINAL SUBMISSION.
% UAI  reviewing is double-blind.

% \author{Francesco Alesiani}

% \author{ {\bf Francesco Alesiani}
% % \thanks{
% \\NEC Laboratories Europe\\
%     Heidelberg\\
%     Germany, DE 
% % }
% }

\author {
    Francesco Alesiani
}
\affiliations{
NEC Laboratories Europe, 
    Heidelberg, 
    Germany \\
    % francesco.alesiani@neclab.eu
    \href{mailto:Francesco.Alesiani@neclab.eu}{\texttt{Francesco.Alesiani@neclab.eu}}
    % \email{\texttt{Francesco.Alesiani@neclab.eu}}
}

% \iffalse
% \title{My Publication Title --- Single Author}
% \author {
%     Author Name
% }
% \affiliations{
%     Affiliation\\
%     Affiliation Line 2\\
%     name@example.com
% }
% \fi

% % Add authors in order of decreasing contribution
% \author[1]{\href{mailto:Francesco Alesiani <Francesco.Alesiani@neclab.eu>?Subject=Your UAI 2021 paper}{Francesco Alesiani}{}} % Lead author
% \author[2]{Coauthor~One}
% \author[1,2]{Coauthor~Two}
% \author[3]{Further~Coauthor}
% \author[1]{Further~Coauthor}
% \author[3]{Further~Coauthor}
% \author[3,1]{Further~Coauthor}
% % Add affiliations after the authors
% \affil[1]{%
%     NEC Laboratories Europe\\
%     Heidelberg\\
%     Germany, DE
% }
% \affil[2]{%
%     Affiliation\\
%     Address\\
%     …
% }
% \affil[3]{…}

\begin{document}
%
% \title{\bigrad: Differentiating through Bilevel Optimization Programming}
%
%\titlerunning{Abbreviated paper title}
% If the paper title is too long for the running head, you can set
% an abbreviated paper title here
%
% \author{Paper \#1241}
% \author{Francesco Alesiani\inst{1}\orcidID{0000-1111-2222-3333} 
% % \and Second Author\inst{2,3}\orcidID{1111-2222-3333-4444} \and
% % Third Author\inst{3}\orcidID{2222--3333-4444-5555}
% }
%
% \authorrunning{Paper \#1241}
% \authorrunning{F. Alesiani}
% First names are abbreviated in the running head.
% If there are more than two authors, 'et al.' is used.
%
% \institute{Princeton University, Princeton NJ 08544, USA \and
% Springer Heidelberg, Tiergartenstr. 17, 69121 Heidelberg, Germany
% \email{lncs@springer.com}\\
% \url{http://www.springer.com/gp/computer-science/lncs} \and
% ABC Institute, Rupert-Karls-University Heidelberg, Heidelberg, Germany\\
% \email{\{abc,lncs\}@uni-heidelberg.de}}
%
\maketitle              % typeset the header of the contribution

\begin{abstract}
% Integrating mathematical programming, as Bilevel Optimization Programming, within deep learning architectures has vast applications in various domains from machine learning to engineering. 
% Bilevel programming is able to capture complex interactions when two actors have conflicting objectives. 
Bilevel Optimization Programming is used to model complex and conflicting interactions between agents, for example in Robust AI or Privacy-preserving AI. Integrating bilevel mathematical programming within deep learning is thus an essential objective for the Machine Learning community. 
Previously proposed approaches only consider single-level programming. In this paper, we extend existing single-level optimization programming approaches and thus propose {\it Differentiating through Bilevel Optimization Programming} 
(\bigrad
% \footnote{\url{https://github.com/falesiani/bigrad}}
% \footnote{Accepted at AAAI23}
) 
for end-to-end learning of models that use Bilevel Programming as a layer. 
\bigrad has wide applicability and can be used in modern machine learning frameworks. \bigrad is applicable to both continuous and combinatorial Bilevel optimization problems. We describe a class of gradient estimators for the combinatorial case which reduces the requirements in terms of computation complexity; for the case of the continuous variable, the gradient computation takes advantage of the push-back approach (i.e. vector-jacobian product) for an efficient implementation. Experiments show that the \bigrad successfully extends existing single-level approaches to Bilevel Programming.
% Anonymous code at \url{https://anonymous.4open.science/r/bigrad_blind-44B0}.
\end{abstract}

%\begin{abstract}
%	Integrating mathematical programming within deep learning architectures has a huge potential to improve modeling physical systems and decision processes. 
%	% rising interest and has big potential in various domains. 
%	%has numerous applications, and 
%	In this paper, we propose the integration of Bilevel Optimization Programming as layer of differentiable systems has application in numerous domains. We propose Differentiating through Bilevel Optimization Programming (\bigrad) as approach for end-to-end learning of models that use Bilevel Programming as a component. Bilevel Programming can be used to model complex interaction logic and dynamics. \bigrad has wide applicability and it can be used in modern machine learning frameworks. We focus on two classes of Bilevel Programming: Continuous problems and combinatorial problems. The framework extends existing approaches of single level optimization programming. We describe a class of gradient estimators for the combinatorial case which reduces the requirements in term of computation complexity; for the continuous variables case the gradient computation takes advantage of push-back approach (i.e. vector-jacobian product) for an efficient implementation. Preliminary experiments suggest that the proposed approach successfully extends existing single level approaches to Bilevel Programming. 
%	% We explore the differentiation of a Bilevel Optimization Problem
%\end{abstract}

\section{Introduction}\label{sec:intro}
% % [Motivation: ]
% %Neural networks provide accurate predictions in discriminate and generative tasks. 
% Neural networks provide unprecedented improvements in perception tasks, 
% % in a variety of domains, 
% however, they struggle to learn basic logic operations \citep{garcez2015neural} or relationships. % structures .
% %when the deployment environment does not belongs to some underlying embedded space and alg	orithms are required to model more complex dynamic or relationship, these models tend to generalize poorly. 
% % When machine learning is applied to physical system or decision systems, it is not only beneficial to 
% % %introduce complex algorithms 
% % integrate domain logic and existing relationships into larger differentiable system, but also to use 
% % %using 
% % general purpose solvers (e.g. for Integer Linear Programming or Non-linear Programming \citep{bertsekas1997nonlinear,boyd2004convex}) and problem specific implementation, which allow to discover those discrete or continuous relationships. 
% % When modelling conflicting objectives, for example in structural learning \cite{elsken2019neural}, simulation of physical system or decision systems, it is not only beneficial to 
% %introduce complex algorithms 

Neural networks provide unprecedented improvements in perception tasks, however, deep neural networks do not natively protect against adversarial attacks nor preserve the privacy of the training dataset. In recent years various approaches have been proposed to overcome this limitation \citep{shafique2020robust}, for example by integrating adversarial training \cite{xiao2020adversarial}.  Some of these approaches require solving some optimization problems during training. 
% % comment?
% When modeling complex systems, for example, decision systems, it is not only beneficial to integrate optimization components into a larger differentiable system but also to use either 
% %using 
% general purpose solvers
% (e.g. for Integer Linear Programming or Non-linear Programming \citep{bertsekas1997nonlinear,boyd2004convex}) or problem-specific implementation, to discover the governing discrete or continuous relationships.
% %To address these limitations, recent works have been proposing end-to-end  learnable (i.e. differentiable) models
Recent approaches propose thus differentiable layers that incorporate either quadratic \citep{amos2017optnet}, convex \citep{agrawal2019differentiable}, cone  \citep{agrawal2019differentiating}, equilibrium  \citep{bai2019deep}, SAT \citep{wang2019satnet} or combinatorial \citep{poganvcic2019differentiation,mandi2020interior,berthet2020learning} programs. The use of optimization programming as a layer of differentiable systems requires computing the gradients through these layers. With discrete variables, the gradient is zero almost everywhere, while with complex (black box) solvers, the gradient may not be accessible. 
\begin{figure}
	\centering
	\includegraphics[width=0.45\textwidth,trim=0cm 0cm 0cm 0cm, clip]{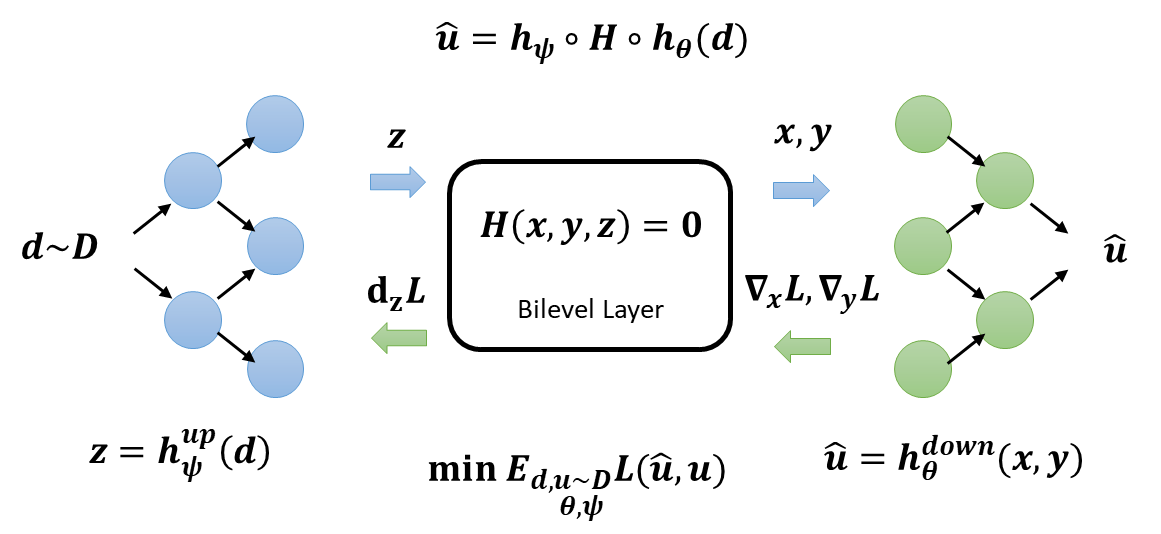}
	\caption{The Forward and backward passes of a Bilevel Programming (\bigrad) layer: the larger system has input $d$ and output $u = h_\psi \circ H  \circ h_\theta (d)$; the bilevel layer has input $z$ and output $x,y$, which are solutions of a Bilevel optimization problem represented by the implicit function $H(x,y,z)=0$.}
	%        \caption{The Forward and backward passes of a Bilevel Programming layer: Visualization of the task of learning a system with input $d$ and output $u = h_\psi \circ H  \circ h_\theta (d)$, from a dataset $D$, where $z$ is the input and $x,y$ are solutions of a Bilevel optimization problem, represented by the implicit function $H(x,y,z)$.}
	\label{fig:implicit_layer}
\end{figure}

Proposed gradient estimates either relax the combinatorial problem \citep{mandi2020interior}, perturb the input variables \citep{berthet2020learning,domke2010implicit} or linearly approximate the loss function \citep{poganvcic2019differentiation}.
%Optimization algorithms address specific classes of optimization problems and the computation of the gradients depends on the optimization problem. Furthermore for combinatorial problems, since the gradients are zero almost everywhere and non continuous, it is necessary to provide proxy gradients (i.e. estimates) that allows the learning process. It is possible to derive gradient estimates by relaxing the combinatorial problem \citep{mandi2020interior}, by perturbing the input variables \citep{berthet2020learning,domke2010implicit} or by linearly approximating the loss function \citep{poganvcic2019differentiation}.
%In this paper, we consider learning bilevel optimization problems. 
These approaches though, do now allow to directly express models with conflicting objectives, for example in structural learning \cite{elsken2019neural} or adversarial system \cite{goodfellow2014generative}.
% , interaction with physical system or between conflicting decision systems. 
We thus consider the use of bilevel optimization programming as a layer.
% , which requires to compute specific gradients for this class of problems. 
Bilevel Optimization Program 
% \citep{kleinert2021survey,colson2007overview,dempe2018bilevel,stackelberg1952theory}, 
\citep{kleinert2021survey,dempe2018bilevel}, 
also known as a generalization of Stackelberg Games, is the extension of a single-level optimization program, where the solution of one optimization problem (i.e. the outer problem) depends on the solution of another optimization problem (i.e. the inner problem). This class of problems can model interactions between two actors, where the action of the first depends on the knowledge of the counter-action of the second. 
% Bilevel optimization problems (\citep{kleinert2021survey,colson2007overview,dempe2018bilevel,stackelberg1952theory}), also known as generalization of Stackelberg Games, model complex feedback dynamics rising from the interaction of two agents. 
%Bilevel Optimization problems are generalization of Stackelberg Games
%\citep{kleinert2021survey,colson2007overview,dempe2018bilevel,stackelberg1952theory}. 
Bilevel Programming finds application in various domains, as in Electricity networks, Economics, Environmental policy, Chemical plants, defense, and planning 
% \citep{dempe2018bilevel,sinha2017review}. 
\citep{dempe2018bilevel}. 
We introduce at the end of the section example applications of Bilevel Optimization Programming. 

In general, Bilevel programs are NP-hard 
\citep{dempe2018bilevel},
% \citep{sinha2017review}, 
they require specialized solvers and it is not clear how to extend single-level approaches since the standard chain rule is not directly applicable.
% For bilevel problems previous approach need to extended and gradients can not be computed using only the cain rule.  
% We thus propose Differentiating through Bilevel Optimization Programming (\bigrad). \bigrad uses existing solvers in the forward pass, while in the backward pass it estimates gradient for both continuous and combinatorial problems based on sensitivity analysis.
% %when the Bilevel problem is defined over continuous variables, while we propose an approximation of the discrete gradient, when the bilevel problem is combinatorial.
% We show how the proposed gradient estimators relate to the single-level analogous and
% %, in preliminary experiments, 
% that the proposed approach is beneficial in both continuous and discrete cases.
By modeling the bilevel optimization problem as an implicit layer \citep{bai2019deep}, we consider the more general case where 1) the solution of the bilevel problem is computed by a bilevel solver; thus leveraging on powerfully solver developed over various decades \citep{kleinert2021survey}; and 2) the computation of the gradient is more efficient since we do not have to propagate gradient through the solver. 

We thus propose Differentiating through Bilevel Optimization Programming (\bigrad):
\begin{itemize}
\item \bigrad (\autoref{sec:bigrad}) comprises of forwarding pass, where existing solvers (e.g. \citep{yang2021provably}) can be used, and backward pass, where \bigrad estimates gradient for both continuous (\autoref{sec:continous-problem}, \autoref{sec:continuous}) and combinatorial (\autoref{sec:combinatorial-problem},\autoref{sec:discrete}) problems based on sensitivity analysis; 
%when the Bilevel problem is defined over continuous variables, while we propose an approximation of the discrete gradient, when the bilevel problem is combinatorial.
\item we show how the proposed gradient estimators relate to the single-level analogous and that 
%, in preliminary experiments, 
the proposed approach is beneficial in both continuous (\autoref{sec:OptimalControl}) and combinatorial optimization (\autoref{sec:Robust},\autoref{sec:SP},\autoref{sec:TSP}, ) learning tasks. 
\end{itemize}
\vspace{-.2cm}
\subsubsection{Adversarial attack in Machine Learning}
Bilevel programming is used the represents the interaction between a machine learning model ($y$) and a potential attacker ($x$) \cite{goldblum2019adversarially} and is used to increase the resilience to intentional or unintended adversarial attacks.
\vspace{-.2cm}
\subsubsection{Min-max problems}
Min-max problems are used to model robust optimization problems \citep{ben2009robust}, where a second variable represents the environment and is constrained to an uncertain set that captures the unknown variability of the environment. 
\vspace{-.2cm}
\subsubsection{Closed-loop control of physical systems}
Bilevel Programming is able to model the interaction of a dynamical system ($x$) and its control sub-system ($y$), as, for example, of an industrial plant or a physical process.
% An example where we can use Bilevel Programming is to model the interaction of a dynamical system and its control sub-system, as for example an industrial plant or a physical process.
% (see in Fig.\ref{fig:implicit_layer_industry}). 
% Let's consider for example the case there the bilevel layer models a dynamical system, as for example an industrial plant as depicted in Fig.\ref{fig:implicit_layer_industry}. 
% The plant includes the control subsystem, which shall work consistently, i.e. without catastrophic failure, even in the case of injection of adversary noise in order to work on wide range of conditions. 
The control sub-system changes based on the state of the underlying dynamical system, which itself solves a physics constraint optimization problem 
\citep{de2018end}. 
\vspace{-.2cm}
\subsubsection{Interdiction problems}
% {\bf \it Interdiction problem example}
Two actors' discrete Interdiction problems \citep{fischetti2019interdiction} arise when one actor ($x$) tries to interdict the actions of another actor ($y$) under budget constraints. These problems can be found in marketing, protecting critical infrastructure, and preventing drug smuggling to hinder nuclear weapon proliferation.
%This class of problem can be modeled naturally as a linear bilevel optimization problem, over discrete variables. 

% \section{A differentiable Bilevel Optimization Program solver}
\section{Differentiable Bilevel Optimization Layer}
% Bilevel Optimization Program is the extension of single-level optimization program, where the solution of one optimization problem (i.e. the outer problem) depends on the solution of another optimization problem (i.e. the inner problem). This class of problems can model interactions between two actors, where the action of the first depends on the knowledge of the counter-action of the second. 
% We consider both the continuous and combinatorial versions. 
We model the Bilevel Optimization Program as an Implicit Layer \citep{bai2019deep}, i.e. as the solution of an implicit equation $H(x,y,z)=0$. We thus compute the gradient using the implicit function theorem, where $z$ is given and represents the parameters of our system we want to estimate, and $x,y$ are output variables (Fig.\ref{fig:implicit_layer}). We also assume we have access
% \footnote{Finding the solution of the bi-level problem is not in the scope of this work.} 
to a bilevel solver $(x,y) = \text{Solve}_H (z)$, e.g. \citep{yang2021provably}.
%\begin{align}
%	(x,y) = \text{Solve} (z) 
%\end{align}
The bilevel Optimization Program is then used as layer of a differentiable system, whose input is $d$ and output is given by $u=h_\psi \circ \text{Solve}_H \circ h_\theta (d)=h_{\psi,\theta}(d)$,  
% \replaced{$u=h_\psi \circ \text{Solve}_H \circ h_\theta (d)=h_{\psi,\theta}(d)$}{$u=h_\psi \circ H  \circ h_\theta (d)=h_{\psi,\theta}(d)$},
where $ \circ$ is the function composition operator. We want to learn the parameters $\psi,\theta$ of the function $h_{\psi,\theta}(d)$ that minimize the loss function $L(h_{\psi,\theta}(d),u)$, using the training data $D^\text{tr}=\{(d,u)_{i=1}^{N^{\text{tr}}}\}$.
%, which is then tested on $D^\text{te}=\{(d,u)_{i=1}^{N^{\text{te}}}\}$. 
In order to be able to perform the end-to-end training, we need to back-propagate the gradient $\dd_z L$ of the Bilevel Optimization Program Layer, which can not be accomplished only using the chain rule.

\subsection{Continuous Bilevel Programming} \label{sec:continous-problem}
We now present the definition of the continuous Bilevel Optimization problem, which comprises two non-linear functions $f,g$, as 
% We consider a Bilevel Optimization problem defined by two non-linear function $f,g$ as 
%\begin{subequations}\label{eq:bilevel_continous}
	\begin{align} \label{eq:bilevel_continous}
		\min_{x \in X} & f(x,y,z) ~~&
		y \in &\arg \min_{y \in Y} g(x,y,z)
	\end{align}
%\end{subequations}
where the left part problem is called {\it outer optimization problem} and resolves
% \footnote{\deleted{ When the outer problem variable is $x$ alone is called pessimistic, while if it solves for $(x,y)$ it is called optimistic, since the $y$ variable helps in solving the upper problem.}} 
for the variable $x \in X$, with $X=\R^n$. The right problem is called the {\it inner optimization problem } and solves for the variable $y \in Y$, with $Y=\R^m$. The variable $z \in \R^p$ is the input variable and is a parameter for the bilevel problem. 
% \footnote{ \deleted{It helps to think $x(z)$ and $y(x,z)$ as the solutions of the outer and inner problems}}.
% {\color{red} 
Min-max is a special case of Bilevel optimization problem
% \begin{equation} \label{eq:min-max}
	$\min_{y \in Y} \max_{x \in X} g(x,y,z)$,
% \end{equation}
where the minimization functions are equal and opposite in sign.  In Sec.\ref{sec:linear_equality_and_nonlinear_inequality}, we describe how the model of Eq.~\ref{eq:bilevel_continous} can be extended in the case of linear and nonlinear constraints.
% }
\subsection{Combinatorial Bilevel Programming} \label{sec:combinatorial-problem}
When the variables are discrete, we restrict the objective functions to be multi-linear \citep{Greub_1967}. Various important combinatorial problems are linear in discrete variables (e.g. VRP, TSP, SAT \footnote{Vehicle Routing Problem, Boolean satisfiability problem.}), one example form is the following
% The motivation will be clear when we present the proposed approach.
%\begin{subequations}\label{eq:bilevel_discrete}
	\begin{align} \label{eq:bilevel_discrete}
		\min_{x \in X} \langle z,x \rangle_A + \langle  y,x \rangle_B,  ~~
		y  \in \arg \min_{y \in Y} \langle  w,y\rangle_C + \langle  x,y\rangle_D
	\end{align}
%\end{subequations}
The variables $x,y$ have domains in $x \in X, y \in Y$, where $X,Y$ are convex polytopes that are constructed from a set of distinct points $\mathcal{X} \subset \R^n, \mathcal{Y} \subset \R^m,$ as their convex hull.  The outer and inner problems are Integer Linear Programs (ILPs). 
The multi-linear operator is represented by the inner product $\langle  x,y\rangle_A = x^TAy$
% \footnote{\deleted{This definition can be extend to the non linear case as $\sigma( \langle  x,y\rangle_A) = \sigma(x^TAy)$.}}
. We only consider the case where we have separate parameters for the outer and inner problems, $z \in \R^p$ and $w \in \R^q$. 
%We assume either the solution to be unique or to be working in the neighbour of a local solution, thus the solution of the inner problem is considered unique. 
%A special case of Bilevel optimization problem is the min-max
%\begin{equation} \label{eq:min-max}
%	\min_{y \in Y} \max_{x \in X} f(x,y,z)
%\end{equation}
%where the minimization function are equal and opposite in sign. Also in this case, with discrete variables, we assume that $f(x,y,z)$ is a multi-linear function.

%Also for the combinatorial case, we have the min-max problem, where $f(x,y,z,w)$ is multilinear and $z,w$ are the variables associated with the outer and inner problems.

% \section{BiGrad}

% \appendix
% \section{Bilevel Gradient}
\section{\bigrad: Gradient estimation} \label{sec:bigrad}
% \section{\bigrad: Gradient estimation for Bilevel Optimization Problem as Implicit Layer}
% Various practical problems can be modeled as bilevel optimization problem as in Eq.\ref{eq:bilevel_continous}, either in the continuous or the discrete form. 
\bigrad provides gradient estimations for both continuous and discrete problems. 
% Even if the discrete and continuous variable cases share a similar structure, the approach is different when evaluating the gradients.  
We can identify the following common basic steps (Alg.\ref{alg:BIL}):
\begin{enumerate}
    \item In the forward pass, solve the combinatorial or continuous Bilevel Optimisation problem as defined in Eq.\ref{eq:bilevel_continous}(or Eq.\ref{eq:bilevel_discrete}) using existing solver ($\text{Solve}_H (z)$) e.g. \citep{yang2021provably};
    \item During the backward pass, compute the gradient $\dd_z L$ (and $\dd_w L$) using the suggested gradients (Sec.\ref{sec:continuous} and Sec.\ref{sec:discrete}) starting from the gradients on the output variables $\nabla_x L$ and $\nabla_y L$.
\end{enumerate}

% \begin{figure}
\begin{algorithm}
% 	\SetAlgoLined
	\begin{enumerate}
	    \item {\bf Input}: Training sample $(\tilde{d},\tilde{u})$ \;
	\item {\bf Forward Pass}: \;
	\begin{enumerate}
	    \item Compute $(x,y) \in \{x,y : H(x,y,z) = 0\}$ using Bilevel Solver:  $(x,y) \in \text{Solve}_H (z) $\;
	    \item Compute the loss function
	   % $L((x,y),\tilde{u})$ 
	    $L(h_\psi \circ H  \circ h_\theta (\tilde{d}),\tilde{u})$,
	    \item Save $(x,y,z)$ for the backward pass
	\end{enumerate}
	\item {\bf Backward Pass}: \;
	\begin{enumerate}
	    \item updates the parameter of the downstream layers $\psi$ using back-propagation \;
	    \item For the continuous variable case, compute based on Theorem~\ref{th:bigrad_cont} around the current solution $(x,y,z)$, without solving the Bilevel Problem
	    \item For the discrete variable case, use the gradient estimates of Theorem~\ref{th:discrete} or 
	    Section \ref{sec:discrete} (e.g. Eq.\ref{eq:discrete_implicit_single_merged} or Eq.\ref{eq:discrete_through}) 
	    by solving, when needed, the two separate problems\;
	    \item Back-propagate the estimated gradient to the downstream parameters $\theta$
	\end{enumerate}	
	\end{enumerate}
	\vspace{4mm}
	\caption{\bigrad Layer: Bilevel Optimization Programming Layer using \bigrad }
	\label{alg:BIL}
\end{algorithm}
% \end{figure}

%\subsection{Continuous Variables}
\subsection{Continuous Optimization gradient estimation}
 \label{sec:continuous}
To evaluate the gradient of the variables $z$ versus the loss function $L$, we need to propagate the gradients of the two output variables $x,y$ through the two optimization problems. We can use the implicit function theorem to approximate locally the function $z \to (x,y)$. We have thus the following main results\footnote{Proofs are in the Supplementary Material}.
% \footnote{A demo implementation of the \bigrad \url{https://anonymous.4open.science/r/bigrad_blind-F7CC/}}
% .
\begin{thm}\label{th:items}
	Considering the bilevel problem of Eq.\ref{eq:bilevel_continous}, we can build the following set of equations that represent the equivalent problem around a given solution $x^*,y^*,z^*$:
%	\begin{subequations}\label{eq:bilevel_continous_eq}
		\begin{align}\label{eq:bilevel_continous_eq}
			F(x,y,z) &= 0 ~~& 
			G(x,y,z) &= 0
		\end{align}
%	\end{subequations}
	where
	% where $F(x,y,z) = \nabla_x f(x,y,x)- \nabla_y f(x,y,x) \nabla y G(x,y,x) \nabla_x G(x,y,x)$ 
	% and $G(x,y,z) = \nabla_y g(x,y,x)$
%	\begin{subequations}\label{eq:bilevel_continous_items}
		\begin{align} \label{eq:bilevel_continous_items}
			F(x,y,z) &= \nabla_x f- \nabla_y f \nabla_y G  \nabla_x G,   ~ &
			G(x,y,z) &= \nabla_y g
		\end{align}
%	\end{subequations}
	where we used the short notation $f=f(x,y,z),g=g(x,y,z),F=F(x,y,z), G=G(x,y,z)$
\end{thm}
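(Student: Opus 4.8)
The plan is to obtain both equations of \eqref{eq:bilevel_continous_eq} as first‑order stationarity conditions of the two nested problems in \eqref{eq:bilevel_continous}, and to use the implicit function theorem to collapse the inner dependence into the outer one. Since $Y=\R^m$ and (I assume) $g$ is $C^2$, the second equation is immediate: for fixed $x,z$ any inner minimizer $y^*\in\arg\min_{y\in Y}g(x,y,z)$ is stationary, i.e. $\nabla_y g(x^*,y^*,z^*)=0$, which is exactly $G(x^*,y^*,z^*)=0$; conversely, under positive definiteness of $\nabla^2_{yy}g$ at $(x^*,y^*,z^*)$ a root of $G$ is a local inner minimizer, so the two descriptions agree locally.

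For the first equation I would first rewrite the outer problem as the single‑level unconstrained problem $\min_{x\in X}\phi(x,z)$ with $\phi(x,z):=f\big(x,y^*(x,z),z\big)$, where $y^*(x,z)$ is the inner solution viewed as a function of its parameters. Applying the implicit function theorem to $G\big(x,y^*(x,z),z\big)=\nabla_y g=0$ — legitimate precisely when $\nabla_y G=\nabla^2_{yy}g$ is nonsingular at $(x^*,y^*,z^*)$ — gives that $y^*$ is differentiable near $(x^*,z^*)$ with $\partial_x y^*=-(\nabla_y G)^{-1}\nabla_x G$. The chain rule then yields the total gradient of the outer objective,
\[
\nabla_x\phi \;=\; \nabla_x f + (\partial_x y^*)^{\top}\nabla_y f \;=\; \nabla_x f - \nabla_x G^{\top}(\nabla_y G)^{-\top}\nabla_y f ,
\]
and the outer stationarity condition $\nabla_x\phi=0$ (valid since $X=\R^n$) is exactly $F(x^*,y^*,z^*)=0$ in the compact notation of \eqref{eq:bilevel_continous_items}. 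Combining the two conditions shows that every solution $(x^*,y^*,z^*)$ of \eqref{eq:bilevel_continous} satisfies \eqref{eq:bilevel_continous_eq}, and conversely every regular root of \eqref{eq:bilevel_continous_eq} is a stationary point of the bilevel program, which is the claimed equivalence around the given solution.

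I expect the main obstacle to be making the phrase ``equivalent around a given solution'' precise and pinning down the hypotheses that support it: $f,g$ smooth enough, $\nabla^2_{yy}g$ invertible so that the implicit function theorem actually produces a differentiable $y^*(x,z)$, and the recognition that replacing the inner $\arg\min$ by the stationarity equation $G=0$ is the standard, only locally faithful, relaxation (global inner optimality is not captured). A secondary, purely notational, point is fixing the matrix and transpose conventions so that $\nabla_x f-\nabla_y f\,\nabla_y G\,\nabla_x G$ in \eqref{eq:bilevel_continous_items} is read as $\nabla_x\phi$ above — in particular with $\nabla_y G$ appearing inverted through the implicit function theorem; once that is settled the remaining computation is a routine chain‑rule expansion.
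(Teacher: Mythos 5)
Your proposal is correct and follows essentially the same route as the paper's proof: inner stationarity gives $G=\nabla_y g=0$, and the outer condition is the vanishing hypergradient $\dd_x f=\nabla_x f+\nabla_y f\,\nabla_x y=0$ with $\nabla_x y=-\nabla_y^{-1}G\,\nabla_x G$ obtained from the implicit function theorem applied to $G=0$. You are also right that the displayed formula for $F$ must be read with $\nabla_y G$ inverted (the paper's proof indeed derives $\nabla_y^{-1}G$), and your explicit invertibility and smoothness hypotheses make precise what the paper leaves implicit.
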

\begin{thm} \label{th:bigrad_cont}
Consider the problem defined in Eq.\ref{eq:bilevel_continous}, then the total gradient of the parameter $z$ w.r.t. the loss function $L(x,y,z)$ is computed from the partial gradients $\nabla_x L, \nabla_y L, \nabla_z L$ as 
\begin{align} \label{eq:bigrad_continuous}
	\dd_z L &= \nabla_z L - 
	\begin{vmatrix}
	\nabla_x L  & \nabla_y L
	\end{vmatrix}
	\begin{vmatrix}
		\nabla_x F & \nabla_y F\\
		\nabla_x G & \nabla_y G
	\end{vmatrix}^{-1}
	\begin{vmatrix}
	\nabla_z F  \\
	 \nabla_z G
\end{vmatrix}
\end{align}
\end{thm}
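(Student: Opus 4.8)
The plan is to treat the proof as a direct application of the implicit function theorem (IFT) to the reduced square system supplied by Theorem~\ref{th:items}. By that theorem, in a neighbourhood of a solution $(x^*,y^*,z^*)$ the bilevel program of Eq.~\ref{eq:bilevel_continous} is equivalent to the pair of vector equations $F(x,y,z)=0$ and $G(x,y,z)=0$ with $F,G$ as in Eq.~\ref{eq:bilevel_continous_items}, so it suffices to differentiate this pair through the induced dependence $z \mapsto (x(z),y(z))$ and then propagate into $L$.

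First I would differentiate each constraint totally with respect to $z$ by the chain rule, obtaining $\nabla_x F\,\dd_z x + \nabla_y F\,\dd_z y + \nabla_z F = 0$ together with the analogous identity for $G$. Stacking these yields the block linear system
\[
J\begin{vmatrix}\dd_z x\\ \dd_z y\end{vmatrix}=-\begin{vmatrix}\nabla_z F\\ \nabla_z G\end{vmatrix},\qquad
J\coloneqq\begin{vmatrix}\nabla_x F & \nabla_y F\\ \nabla_x G & \nabla_y G\end{vmatrix},
\]
and, assuming $J$ is invertible at $(x^*,y^*,z^*)$ (the regularity hypothesis of the IFT), I would solve for $(\dd_z x,\dd_z y)$ by left-multiplying with $J^{-1}$. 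Then I would expand the total derivative of the loss as $\dd_z L = \nabla_z L + \nabla_x L\,\dd_z x + \nabla_y L\,\dd_z y$ and substitute the expression just obtained for the implicit sensitivities; this produces exactly Eq.~\ref{eq:bigrad_continuous}. For the implementation claim I would observe that one should evaluate the product as $\big(\begin{vmatrix}\nabla_x L & \nabla_y L\end{vmatrix}\,J^{-1}\big)\begin{vmatrix}\nabla_z F\\ \nabla_z G\end{vmatrix}$, i.e. first solve the transposed (adjoint) system $J^{\top}\lambda=\begin{vmatrix}\nabla_x L & \nabla_y L\end{vmatrix}^{\top}$ and then contract $\lambda$ with the $z$-derivatives — the vector--Jacobian (push-back) form — so that neither the full $(x,y)$-to-$z$ Jacobian nor a dense inverse of $J$ is ever formed.

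The main obstacle is not the algebra but discharging the hypotheses that make the IFT applicable here. One must ensure $F$ and $G$ are continuously differentiable near $(x^*,y^*,z^*)$: for $G=\nabla_y g$ this needs $g\in C^2$, and for $F$ it additionally needs the inner Hessian $\nabla_y G=\nabla^2_{yy}g$ to be invertible (so the reduced gradient appearing in Eq.~\ref{eq:bilevel_continous_items} is well-defined and smooth) together with $f\in C^2$; these are the standard second-order sufficiency / strict-complementarity-type sensitivity conditions for the inner problem, which I would state explicitly as assumptions of the theorem. One must also assume $J$ is nonsingular at the reference point before invoking the IFT. A minor notational caveat to keep in mind throughout: the bar-delimited arrays in Eq.~\ref{eq:bigrad_continuous} (in particular the block holding $\nabla_z F$ and $\nabla_z G$) denote stacked rectangular matrices of partial derivatives, not determinants, so every $|\cdot|$ in the statement and proof should be read as a block matrix.
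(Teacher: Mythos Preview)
Your argument is correct and yields Eq.~\ref{eq:bigrad_continuous}, but it is organised as a \emph{forward} sensitivity computation (apply the IFT to $F=0,G=0$, solve the block system for $\dd_z x,\dd_z y$, then substitute into the chain-rule expansion of $\dd_z L$), with the adjoint solve mentioned only afterwards as an implementation device. The paper instead runs the \emph{adjoint} (discrete adjoint / Lagrangian) derivation from the outset: it writes the total differentials $\dd L,\dd F=0,\dd G=0$, forms $\dd L+\lambda\,\dd F+\gamma\,\dd G$, and chooses the multipliers $\lambda,\gamma$ so that the $\dd x$ and $\dd y$ coefficients vanish, which gives directly the transposed system $J^{\top}(\lambda,\gamma)^{\top}=-(\nabla_x L,\nabla_y L)^{\top}$ and hence Eq.~\ref{eq:bigrad_continuous}. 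The two derivations are mathematically dual and equally rigorous; yours makes the IFT hypotheses (smoothness of $F,G$, invertibility of $J$, invertibility of $\nabla^2_{yy}g$) more explicit, while the paper's adjoint presentation lands immediately on the vector--Jacobian form without ever materialising $\dd_z x,\dd_z y$, which is precisely the computational point emphasised after the theorem.
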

The implicit layer is thus defined by the two conditions $F(x,y,z)=0$ and $G(x,y,z)=0$. We notice that Eq.\ref{eq:bigrad_continuous} can be solved without explicitly computing the Jacobian matrices and inverting the system, but by adopting the Vector-Jacobian product approach we can proceed from left to right to evaluate $\dd_z L$. In the following section, we describe how affine equality constraints and nonlinear inequality can be used when modeling $f,g$. We also notice that the solution of Eq.\ref{eq:bigrad_continuous} does not require solving the original problem, but only applying matrix-vector products, i.e. linear algebra, and the evaluation of the gradient that can be computed using automatic differentiation. 
The extension of Theorem.\ref{th:bigrad_cont} to cone programming is presented in Sec.\ref{sec:bilevel_cone}.  

\subsection{Combinatorial Optimization gradient estimation}\label{sec:discrete}
When we consider discrete variables, the gradient is zero almost everywhere. 
%not well defined. 
We thus need to resort to estimating gradients. For the bilevel problem with discrete variables of Eq.\ref{eq:bilevel_discrete}, when the solution of the bilevel problem exists and its solution is given \citep{kleinert2021survey}, Thm.\ref{th:discrete} gives the gradients of the loss function with respect to the input parameters.
\begin{thm}\label{th:discrete}
Given the Eq.\ref{eq:bilevel_discrete} problem, the partial variation of a cost function $L(x,y,z,w)$ on the input parameters has the following form:
\begin{subequations}\label{eq:discrete_partial_grad}
\begin{align}
\dd_z L &= \nabla_z L + [\nabla_x L + \nabla_y L \nabla_x y] \nabla_z x \\
\dd_w L &= \nabla_w L + [\nabla_x L \nabla_y x + \nabla_y L] \nabla_w y 
\end{align}
\end{subequations}
%where
%\begin{subequations}\label{eq:discrete_partial}
%\begin{align}
%\nabla_z x(z,y) &= A^{-1} \nabla_{z^2}^2 \tilde{\Phi}_\eta (z,y) \left.\right|_{\eta \to 0} \\
%\nabla_w y(w,z) &= C^{-1} \nabla_{w^2}^2 \tilde{\Psi}_\eta (w,z) \left.\right|_{\eta \to 0} \\
%\nabla_x y(x,w) &= D^{-1} \nabla_{x^2}^2 \tilde{\Theta}_\eta (x,w) \left.\right|_{\eta \to 0} \\
%\nabla_y x(z,y) &= B^{-1} \nabla_{y^2}^2 \tilde{W}_\eta (z,y) \left.\right|_{\eta \to 0} \\
%\nabla_z y &= \nabla_x y \nabla_z x
%\end{align}
%\end{subequations}
%and
%\begin{subequations}\label{eq:discrete_perturbed}
%\begin{align}
%\tilde{\Phi}_\eta (z,y) &= \E_{u \sim U} \Phi (z + \eta u ,y)   \\
%\tilde{\Psi}_\eta (w,x) &=  \E_{u \sim U} \Psi (w + \eta u ,x) \\
%\tilde{\Theta}_\eta (x,w) &= \E_{u \sim U} \Psi (w  ,x + \eta u)  \\
%\tilde{W}_\eta (y,z) &= \E_{u \sim U} \Phi (z , y + \eta u )   
%\end{align}
%\end{subequations}
%, while
%\begin{subequations}\label{eq:discrete_basis}
%\begin{align}
%\Phi (z,y) &=  \min_{x \in X} \langle z,x\rangle_A + \langle y,x\rangle_B \\
%\Psi (w,x) &=  \min_{y \in Y} \langle w,y\rangle_C + \langle x,y\rangle_D 
%\end{align}
%\end{subequations}
%under the condition of \citep{berthet2020learning}, while $\tau$ and $\mu$ are hyper-parameters.
\end{thm}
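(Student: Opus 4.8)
The plan is to treat the bilevel solution as defining two (locally) differentiable maps and then apply the multivariate chain rule to $L(x,y,z,w)$, exploiting the fact that in Eq.~\ref{eq:bilevel_discrete} the parameter $z$ enters only the outer objective and $w$ only the inner objective. First I would record the dependency structure of a bilevel optimum $(x^*,y^*)$: the inner optimality conditions express $y^*$ as a function $y=y(x,w)$ of the outer action and of $w$ alone, while the outer optimality conditions (with $y$ treated as the inner response) express $x^*$ as a function $x=x(z,y)$ of $z$ and of $y$ alone. Equivalently, one may substitute $y=y(x,w)$ into the outer problem and obtain a reduced single-level program in $x$; either way the point to establish is that $z$ influences $y$ only through $x$, and symmetrically $w$ influences $x$ only through $y$. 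In the continuous analogue this is Theorem~\ref{th:items}/Theorem~\ref{th:bigrad_cont} together with the implicit function theorem; in the combinatorial case these derivatives are replaced by the single-level gradient estimators $\nabla_z x$, $\nabla_x y$, $\nabla_y x$, $\nabla_w y$ of the outer and inner ILPs, so the statement should be read as the composition rule for those estimators.

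Given that structure, $\dd_z L$ follows from a short chain-rule computation. Writing the total derivative of $L(x,y,z,w)$ along the solution map,
\begin{align*}
\dd_z L = \nabla_z L + \nabla_x L\, \dd_z x + \nabla_y L\, \dd_z y ,
\end{align*}
I would then use that $y=y(x,w)$ carries no explicit $z$-dependence and that $w$ does not vary with $z$, so $\dd_z y = \nabla_x y\, \dd_z x$, with $\dd_z x$ written $\nabla_z x$; substituting gives $\dd_z L = \nabla_z L + [\nabla_x L + \nabla_y L\,\nabla_x y]\,\nabla_z x$, the first claimed identity. The second identity is the mirror image: $\dd_w L = \nabla_w L + \nabla_x L\, \dd_w x + \nabla_y L\, \dd_w y$, and since $x=x(z,y)$ has no explicit $w$-dependence while $z$ is held fixed as $w$ varies, $\dd_w x = \nabla_y x\, \dd_w y$ with $\dd_w y$ written $\nabla_w y$; collecting terms yields $\dd_w L = \nabla_w L + [\nabla_x L\,\nabla_y x + \nabla_y L]\,\nabla_w y$. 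Throughout, the $\nabla$'s are read as Jacobians and the products as matrix products, so the shapes match.

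The step that needs the most care, and the one I expect to be the main obstacle, is the first one: legitimizing the functional relations $y=y(x,w)$ and $x=x(z,y)$ and, in particular, justifying that the cross-influence is one-directional rather than a genuine fixed-point coupling that would force inverting a matrix of the form $(I-\nabla_y x\,\nabla_x y)$. This is exactly where the modelling assumption of \emph{separate} outer/inner parameters $z,w$ (stated in Sec.~\ref{sec:combinatorial-problem}) is used: because $z$ appears only in the outer level it reaches $y$ solely through $x$, and because $w$ appears only in the inner level it reaches $x$ solely through $y$, so the chain rule closes without a feedback inverse. In the combinatorial regime I would additionally note that the true gradients vanish almost everywhere, so the identities are to be interpreted as the rule for propagating the chosen single-level surrogates, and that differentiability of $x(z,y)$ and $y(w,x)$ is assumed only in the sense required by whichever surrogate is plugged in; no genuine smoothness of the ILP solution maps is claimed.
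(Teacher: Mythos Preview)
Your proposal is correct and follows essentially the same route as the paper. The paper's own proof writes the total differential $\dd L = \nabla_x L\,\dd x + \nabla_y L\,\dd y + \nabla_z L\,\dd z + \nabla_w L\,\dd w$, invokes the dependency diagram ($z\to x$, $w\to y$, $x\leftrightarrow y$, and $\dd w/\dd z=\dd z/\dd w=0$), and reads off the two identities exactly as you do; the only extra ingredient the paper adds is to anchor the meaning of the individual Jacobians $\nabla_z x,\nabla_w y,\nabla_x y,\nabla_y x$ in the perturbed minimizer framework of \citep{berthet2020learning} (their Eqs.~\ref{eq:discrete_basis}--\ref{eq:discrete_perturbed}), which is what you allude to when you say the identities are the composition rule for single-level surrogate gradients.
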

The $ \nabla_x y, \nabla_y x$ terms capture the interaction between outer and inner problems. We could estimate the gradients in Thm.\ref{th:discrete} using the perturbation approach suggested in \citep{berthet2020learning}, which estimates the gradient as the expected value of the gradient of the problem after perturbing the input variable, but, similar to REINFORCE \citep{williams1992simple}, this introduces large variance.
%, since we exchange the expectation with the evaluation of the function, thus suffering similar effects. 
While it is possible to reduce variance in some cases \citep{grathwohl2017backpropagation} with the use of additional trainable functions, we consider alternative approaches as described in the following. 

\subsubsection{Differentiation of  black box  combinatorial  solvers} \label{sec:implicit}
\citep{poganvcic2019differentiation} propose a way to propagate the gradient through a single-level combinatorial solver, where  
$\nabla_z L  \approx \frac1{\tau} [ x( z + \tau \nabla_x L) - x(z)]$ when $x(z) = \arg \max_{x \in X} \langle x,z \rangle$. 
% This approach resempble \citep{domke2010implicit} using an alternative approach, where the gradinet is estimated by a small perturbation in two opposite directions. 
We thus propose to compute the variation on the input variables from the two separate problems of the Bilevel Problem:
\begin{subequations}\label{eq:discrete_implicit}
\begin{align}
% \label{eq:discrete_implicit}
\nabla_z L  &\approx 1/{\tau} [ x( z + \tau A\nabla_x L,y) - x(z,y)] ~~ \\
\nabla_w L  &\approx  1/{\tau} [ y( w + \tau C \nabla_y L,x) - y(w,x)] 
% \nabla_z L &=& \nabla_z L^x + \nabla_z L^y
\end{align}
\end{subequations}
or alternatively, if we have only access to the Bilevel solver and not to the separate ILP solvers, we can express 
%\begin{subequations}\label{eq:discrete_implicit_single_merged}
\begin{align}\label{eq:discrete_implicit_single_merged}
\nabla_{z,w} L  &\approx % \frac1{\tau} 
 1/{\tau} [ s( v + \tau E\nabla_{x,y} L) - s(v)]
\end{align}
%\end{subequations}
where $x(z,y)$ and $y(w,x)$ represent the solutions of the two problems separately, $s(v) = (z,w) \to (x,y)$ the complete solution to the Bilevel Problem, $\tau \to 0$ is a hyper-parameter and $E = \begin{bmatrix} A &0 \\0 &C \end{bmatrix}$. This form is more convenient than Eq.\ref{eq:discrete_partial_grad} since it does not require computing the cross terms, ignoring thus the interaction of the two levels.   

% \subsubsection{Implicit differentiation by perturbation} \label{sec:implicit}
% For example \citep{domke2010implicit} shows how using implicit differentiation by perturbation we can approximate $
% \nabla_z L  \approx \frac1{\tau} [ x( z + \tau \nabla_x L) - x(z)]$ when $x(z) = \arg \max_{x \in X} <x,z> + H(x)$ and $H(x)$ is the entropy function introduced as regularization term. Using an alternative approach, \citep{poganvcic2019differentiation} show a similar gradient estimator. We can thus compute the variation on the input variables from the two separate problems of the Bilevel Problem:
% \begin{subequations}\label{eq:discrete_implicit}
% \begin{align}
% \nabla_z L  &\approx \frac1{\tau} [ x( z + \tau \nabla_x L,y) - x(z,y)]\\
% \nabla_w L  &\approx \frac1{\tau} [ y( w + \tau \nabla_y L,x) - y(w,x)] 
% % \nabla_z L &=& \nabla_z L^x + \nabla_z L^y
% \end{align}
% \end{subequations}
% where $x(z,y)$ and $y(w,x)$ represent the solutions of the two problems separately and $\tau \to 0$ is a hyper-parameter. This form is more convenient of Eq.\ref{eq:discrete_partial_grad}, since it does not require to compute the cross terms, ignoring thus the interaction of the two levels.   

% \subsubsection{The squared $\ell_2$ loss functions}\label{sec:losses}
\subsubsection{Straight-Through gradient}\label{sec:losses}
In estimating the input variables $z,w$ of our model, we may not be interested in the interaction between the two variables $x,y$. 
%This for example happens, when we consider specific cost functions.
% , and may be extended in the general case, if we are only interested in a local changes. 
% \paragraph{The squared $\ell_2$ Loss function gradient} 
Let us consider, for example, the squared $\ell_2$  loss function defined over the output variables
$$
L^2(x,y) = L^2(x) + L^2(y) 
$$
where $L^2(x)= \frac1{2} \| x-x^*\|^2_2$ and $x^*$ is the true value. The loss is non-zero only when the two vectors disagree, and with integer variables, it counts the difference squared, or, in the case of the binary variables, it counts the number of differences.
If we compute $\nabla_x  L^2(x)= (x - x^*)$ in the binary case, we have that $\nabla_{x_i}  L^2(x) = +1$ if $ x^*_i=0 \land x_i=1$, $\nabla_{x_i}  L^2(x) = -1$ if $ x^*_i=1 \land x_i=0$, and $0$ otherwise. This information can be directly used to update the $z_i$ variable in the linear term $\langle z,x \rangle$, thus we can estimate the gradients of the input variables as $\nabla_{z_i}L^2 = - \lambda \nabla_{x_i}L^2$ and $\nabla_{w_i}L^2 = - \lambda \nabla_{y_i}L^2$, with some weight $\lambda>0$. The intuition is that the weight $z_i$ associated with the variable $x_i$ is increased when the value of the variable $x_i$ reduces. In the general multilinear case, we have additional multiplicative terms. Following this intuition 
% (see Sec.\ref{sec:alternative}), 
(see Sec.A.3), 
we thus use as an estimate of the gradient of the variables 
%\begin{subequations}\label{eq:discrete_through}
\begin{align}\label{eq:discrete_through}
\nabla_z L  &= - A \nabla_x L ~~&
\nabla_w L  &= - C \nabla_y L 
\end{align}
%\end{subequations}
This is equivalent in Eq.\ref{eq:bilevel_discrete} where $\nabla_z x = \nabla_w y  = -I$ and $\nabla_y x = 0$, thus $\nabla_x y = 0$. This update is also equivalent to Eq.\ref{eq:discrete_implicit}, without the solution computation. The advantage of this form is that it does not require solving for an additional solution in the backward pass. For the single-level problem, the gradient has the same form as the Straight-Through gradient proposed by \citep{bengio2013estimating}, with surrogate gradient $\nabla_z x = -I$.

\section{Related Work}
\paragraph{Bilevel Programming in machine learning}
Various papers model machine learning problems as Bilevel problems, for example in Hyper-parameter Optimization \citep{mackay2019self,franceschi2018bilevel}, Meta-Feature Learning \citep{li2016learning}, Meta-Initialization Learning \citep{rajeswaran2019meta}, Neural Architecture Search \citep{liu2018darts}, Adversarial Learning \citep{li2019learning}
% , Deep Reinforcement Learning \citep{vahdat2020unas}
and Multi-Task Learning \citep{alesiani2020towards}. In these works, the main focus is to compute the solution to the bilevel optimization problems. In  \citep{mackay2019self,lorraine2018stochastic}, the best response function is modeled as a neural network and the solution is found using iterative minimization, without attempting to estimate the complete gradient. Many bilevel approaches rely on the use of the implicit function to compute the hyper-gradient (Sec.~3.5 of \citep{colson2007overview}) but do not use bilevel as a layer.
%\paragraph{Relationship to existing bilevel works}
%In previous works \citep{franceschi2017forward,colson2007overview,kolstad1990derivative} the gradient (i.e. hyper-gradient) w.r.t. the outer-level problem variables of a bilevel optimization problem is used to solve the bilevel problem by unrolling the solution of the inner lever over few iterations \citep{franceschi2017forward}, or derived its form for specific applications \citep{rajeswaran2019meta,alesiani2020towards} (Sec.~3.5 of \citep{colson2007overview}), while in other works the hyper-gradient is never explicitly computed, but the outer and inner problems are solved in nested loops, for example using Trust-region methods \citep{colson2007overview}. In these approaches the hyper-gradient is not used to propagate the gradient from previous layers, thus we do not know how to compute the gradient with respect to the problem parameters. 
%In \citep{mackay2019self} the authors propose a form of the best response function and use it to solve for $x,y$. It is not in the scope of these works to propagate backwards the gradient information. 
%Whereas here, we consider the case where the solution of the bilevel optimization problem is the output (vs the input) and the gradient needs to flow from the downstream layers to upstream layers (back-propagation). 
\paragraph{Quadratic, Cone and Convex {single-level} Programming}
Various works have addressed the problem of differentiate through quadratic, convex, or cone programming \citep{amos2019differentiable,amos2017optnet,agrawal2019differentiating,agrawal2019differentiable}. In these approaches, the optimization layer is modeled as an implicit layer and for the cone/convex case, the normalized residual map is used to propagate the gradients. Contrary to our approach, this work only addresses single-level problems. These approaches do not consider combinatorial optimization.
\paragraph{Implicit layer Networks} 
While classical deep neural networks perform a single pass through the network at inference time, a new class of systems performs inference by solving an optimization problem. Examples of this are Deep Equilibrium Network (DEQ) \citep{bai2019deep} and NeurolODE (NODE) \citep{chen2018neural}. Similar to our approach, the gradient is computed based on a sensitivity analysis of the current solution. These methods only consider continuous optimization. 

\paragraph {Combinatorial Optimization (CO)}
Various papers estimate gradients of single-level combinatorial problems using relaxation. \citep{wilder2019melding,elmachtoub2017smart,ferber2020mipaal, mandi2020interior} for example use $\ell_1,\ell_2$ or log barrier to relax the Integer Linear Programming (ILP) problem. Once relaxed the problem is solved using standard methods for continuous variable optimization. 
% the KKT conditions and then for example adopting interior point methods. 
An alternative approach is suggested in other papers. For example, in \citep{poganvcic2019differentiation} the loss function is approximated with a linear function and this leads to an estimate of the gradient of the input variable similar to the implicit differentiation by perturbation form \citep{domke2010implicit}. \citep{berthet2020learning} is another approach that uses also perturbation and change of variables to estimate the gradient in an ILP problem.  SatNet \citep{wang2019satnet} solves MAXSAT problems by solving a continuous semidefinite program (SDP) relaxation of the original problem. These works only consider single-level problems.

\paragraph{Discrete latent variables} 
Discrete random variables provide an effective way to model multi-modal distributions over discrete values, which can be used in various machine learning problems.
% , e.g. in language models \citep{yang2017improved} or for conditional computation \citep{bengio2013estimating}. 
Gradients of discrete distribution are not mathematically defined, thus, in order to use the gradient-based method, gradient estimations have been proposed. A class of methods is based on the Gumbel-Softmax estimator 
\citep{maddison2016concrete}. 
% \citep{jang2016categorical,maddison2016concrete,Paulus_Maddison_Krause_2021}. 
Gradient estimation of the exponential family of distributions over discrete variables is estimated using the perturb-and-MAP method in \citep{niepert2021implicit}.
% discrete exponential family distributions 

\paragraph{Predict then optimize}
Predict then Optimize (two-stage) \citep{elmachtoub2017smart,ferber2020mipaal} or solving linear programs and submodular maximization from \citep{wilder2019melding} solve optimization problems when the cost variable or the minimization function is directly observable. On the contrary, in our approach we only have access to a loss function on the output of the bilevel problem, thus allowing us to use it as a layer. 

\paragraph{Neural Combinatorial Optimization (NCO)}
NCO employs deep neural networks to  derive efficient CO heuristics. NCO includes supervised learning \citep{joshi2019efficient}  and reinforcement learning \citep{kool2018attention}. 

% % \begin{figure}[!hbpt]
% \begin{figure}[]
% 	\centering
% 	\subfigure[] {
% %		\includegraphics[width=0.4\textwidth, trim = 0 0 0 .1cm,clip]{}
% % 		\includegraphics[width=0.5\textwidth, trim = 0 0 0 .1cm,clip]{}
% 		\includegraphics[width=0.21\textwidth, trim = 0 0 0 .1cm,clip]{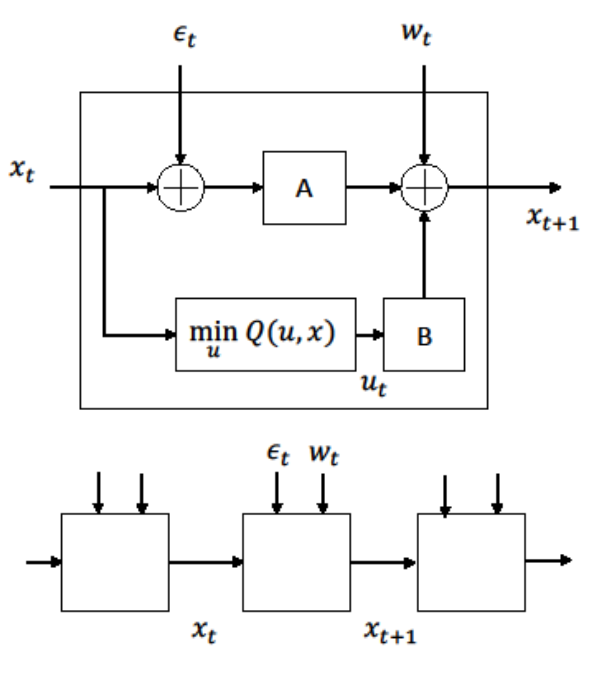}
% 	}
% 	\subfigure[] {\centering
% 		\includegraphics[width=0.21\textwidth, trim = .1cm .1cm .1cm .1cm,clip]{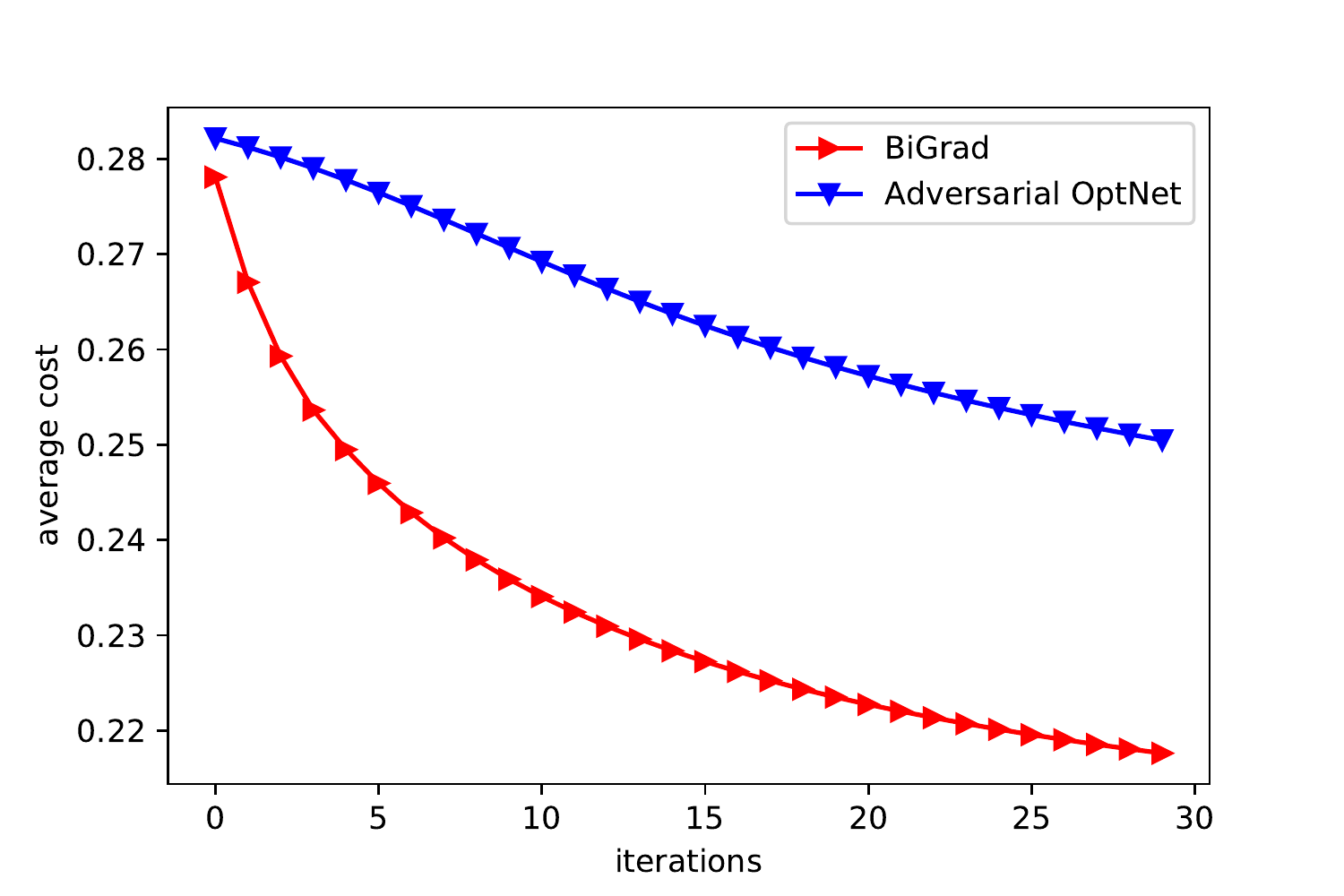}
% 	}
% \caption{\footnotesize
% 	(a) Visualization of the Optimal Control Learning network, where a disturbance $\epsilon_t$ is injected based on the control signal $u_t$.
% 	(b) Comparison of the training performance for $N=2$, $T=20$ and epochs=$10$ of the BiGrad and the Adversarial version of the OptNet \citep{amos2017optnet}.}
% \label{fig:optimal_control}
% \vspace{-.3cm}
% \end{figure}

\section{Experiments}
%\begin{wrapfigure}[40]{R}{12em}
%	%	\includegraphics[width=\linewidth]{image}
%	\includegraphics[width=0.4\textwidth]{figures/OptimalControlAdversary.pdf}
%	\caption{Visualization of the Optimal Control Learning network, where a disturbance $\epsilon_t$ is injected based on the control signal $u_t$.}
%	\label{fig:optimal_control_blocks}
%	\smallskip\par
%	%	\includegraphics[width=\linewidth]{image}
%	\includegraphics[width=0.4\textwidth]{figures/adp_bilevel_comparison_30.pdf}
%	\caption{Comparison of the training performance for $N=2$, $T=20$ and $epochs=10$ of the BiGrad and the Adversarial version of the OptNet \citep{amos2017optnet}.}
%	\label{fig:optimal_control_performances}
%\end{wrapfigure}
We evaluate \bigrad with continuous and combinatorial problems to show that improves over single-level approaches. In the first experiment, we compare the use of \bigrad versus the use of the implicit layer proposed in \citep{amos2017optnet} for the design of Optimal Control with adversarial noise. 
In the second part, after experimenting with an adversarial attack, we explore the performance of \bigrad with two combinatorial problems with Interdiction, where we adapted the experimental setup proposed in \citep{poganvcic2019differentiation}. In these latter experiments, we compare the formulation in Eq.\ref{eq:discrete_implicit_single_merged} (denoted by Bigrad(BB)) and the formulation of Eq.\ref{eq:discrete_through} (denoted by Bigrad(PT)). In addition, we compare with the single level BB-1 from \citep{poganvcic2019differentiation} and single level straight-through \citep{bengio2013estimating,Paulus_Maddison_Krause_2021}, with the surrogate gradient $\nabla_z x = -I$, (PT-1) gradient estimations. We compare against Supervised learning (SL), which ignores the underlying structure of the problem and directly predicts the solution of the bilevel problem.  
% is also presented for comparison.
% , where ResNet18 is used to directly predict the solution of the bilevel problem. 

\subsection{Optimal Control with adversarial disturbance}\label{sec:OptimalControl}
We consider the design of robust stochastic control for a Dynamical System \citep{agrawal2019differentiating}. The problem is to find a feedback function $u = \phi(x)$ that minimizes
\begin{subequations}\label{eq:optimal_control_main}
\begin{align} %\label{eq:optimal_control_main}
\min_\phi & \E \frac1{T} \sum_{t=0}^{T} \| x_t\|^2 + \| \phi(x_t)\|^2 ~~  \\ %&
\text{s.t.} ~& x_{t+1} = A x_t + B \phi(x_t) + w_t, \forall t
\end{align}
\end{subequations}
where $x_t \in \R^n$ is the state of the system, while $w_t$ is a i.i.d. random disturbance and $x_0$ is given initial state.
% Optimal Control
% \begin{figure}[!hbpt]
\begin{figure}[]


	\centering
	\subfigure[] {
    \includegraphics[width=0.2\textwidth, trim = 0 0 0 .1cm,clip]{figures/optimal_network.png}
	}
	\subfigure[] {\centering	\includegraphics[width=0.2\textwidth, trim = .1cm .2cm 1.5cm .1cm, clip]{figures/adp_bilevel_comparison_30.pdf}
	}
\caption{\footnotesize
	(a) Visualization of the Optimal Control Learning network, where a disturbance $\epsilon_t$ is injected based on the control signal $u_t$.
	(b) Comparison of the training performance for $N=2$, $T=20$ and epochs=$10$ of the \bigrad and the Adversarial version of the OptNet \citep{amos2017optnet}.}
\label{fig:optimal_control}
\vspace{-.3cm}
\end{figure}
%that it is either given or drawn from a known distribution. 
To solve this problem we use Approximate Dynamic Programming (ADP) \citep{wang2010fast} that solves a proxy quadratic problem
%\begin{subequations}\label{eq:optimal_control_ctrl}
\begin{align}\label{eq:optimal_control_ctrl}
\min_{u_t} ~~ & u_t^T P u_t + x_t Q u_t + q^t u_t ~~&
\text{s.t.} ~~ & \| u_t \|_2 \le 1
\end{align}
%\end{subequations}
We can use the optimization layer as shown in Fig.\ref{fig:optimal_control}(a) and update the problem variables (e.g. $P,Q,q$) using gradient descent. We use the linear quadratic regulator (LQR) solution as the initial solution \citep{kalman1964linear}. The optimization module is replicated for each time step $t$, similarly to the Recursive Neural Network (RNN).
\begin{table}
\centering
	\caption{\footnotesize Optimal Control Average Cost; Bilevel approach improves (lower cost) over the two-step approach because is able to better capture the interaction between noise and control dynamics.}
	\label{tab:OptimalControl}
% 	\small
\footnotesize
\begin{tabular}{llll}
\toprule
& LQR & OptNet & Bilevel \\
\midrule
Adversarial & 2.736 & 0.2722 &  {\bf 0.2379 } \\
(10 steps) &   &   &    \\
(30 steps) & -  &  0.2511 &  {\bf 0.2181}  \\
% &1 & 1 &  \\
%  \hline
 \bottomrule
\end{tabular}
\end{table}

We can build a resilient version of the controller in the hypothesis that an adversarial is able to inject a noise of limited energy, but is arbitrary dependent on the control $u$, by solving the following bilevel optimization problem
%\begin{subequations}\label{eq:optimal_control_bilevel}
%\begin{align}
%\max _\epsilon ~~ &  Q(u,x+\epsilon) \\
%\text{s.t.} ~~& ||\epsilon|| \le \sigma \\
%u (\epsilon) &= \arg \min_u  Q(u,x) \\
%\text{s.t.} ~~& \| u \|_2 \le 1
%\end{align}
%\end{subequations}
\begin{subequations}\label{eq:optimal_control_bilevel}
	\begin{align}
		\max _\epsilon ~~ &  Q(u_t,x_t+\epsilon) ~ &
		\text{s.t.} ~~& ||\epsilon|| \le \sigma \\
		u_t (\epsilon) &= \arg \min_{u_t } Q(u_t,x_t) ~ &
		\text{s.t.} ~~& \| u_t \|_2 \le 1
	\end{align}
\end{subequations}
where $Q(u,x) = u^T P u + x_t Q u + q^t u$ and we want to learn the parameters $z=(P,Q,q)$, where $y=u_t,x=\epsilon$ of Eq.\ref{eq:bilevel_continous}.

% \begin{table}
% \centering
% 	\caption{Optimal Control Average Cost}
% 	\label{tab:OptimalControl}
% % 	\small
% \footnotesize
% \begin{tabular}{llll}
% \toprule
% & LQR & OptNet & Bilevel \\
% \midrule
% Adversarial & 2.736 & 0.2722 &  {\bf 0.2379 } \\
% (10 steps) &   &   &    \\
% (30 steps) & -  &  0.2511 &  {\bf 0.2181}  \\
% % &1 & 1 &  \\
% %  \hline
%  \bottomrule
% \end{tabular}
% \end{table}

We evaluate the performance to verify the viability of the proposed approach and compare with LQR and OptNet \citep{amos2017optnet}, where the outer problem is substituted with the best response function that computes the adversarial noise based on the computed output; in this case, the adversarial noise is a scaled version of $Q u$ of Eq.\ref{eq:optimal_control_ctrl}.
Tab.\ref{tab:OptimalControl} and Fig.\ref{fig:optimal_control}(b) present the performance using \bigrad, LQR and the adversarial version of OptNet. %, showing improved accuracy.
%{\color{blue} 
%	As shown in Fig.\ref{fig:optimal_control}(a), the noise is free to inject the worst noise based on the output of the controller $u_t$. In the adversarial approach, do not include the effect of the disturbance, thus leading inferior performance as describe in the Tab.\ref{tab:OptimalControl}.
% By ignoring the interaction between the two optimization problems, OptNet shows inferior performances (Tab.\ref{tab:OptimalControl}), even if the adversarial noise is included in the training, while \bigrad better models the noise and control interaction.  
\bigrad improves over two-step OptNet (Tab.\ref{tab:OptimalControl}), because is able to better model the interaction between noise and control dynamic.

\begin{table}
\centering
\footnotesize
\begin{tabular}{rllllllll}
% \begin{tabular}{rrrrrrrrr}
\toprule
 $L_\infty \le \alpha$ &  DCNN &  Bi-DCNN &   CNN &   CNN*  \\
\midrule
   0 &          62.9   $\pm$ 0.3 &     {\bf 64.0}  $\pm$  0.4 &     63.4 $\pm$   0.7 &                 63.6 $\pm$   0.5 \\
   5 &          42.6   $\pm$ 1.0 &     {\bf 44.5}  $\pm$  0.2 &     43.8 $\pm$   1.2 &                 44.3 $\pm$   1.0 \\
  10 &          23.5   $\pm$ 1.5 &     {\bf 25.3}  $\pm$  0.8 &     24.3 $\pm$   1.0 &                 24.2 $\pm$   1.0 \\
  15 &          14.4   $\pm$ 1.4 &     {\bf 15.6}  $\pm$  0.7 &     14.6 $\pm$   0.7 &                 14.3 $\pm$   0.4 \\
  20 &           9.1   $\pm$ 1.2 &     {\bf 10.0}  $\pm$  0.6 &      9.2 $\pm$   0.4 &                  8.9 $\pm$   0.2 \\
  25 &           6.1   $\pm$ 1.0 &      {\bf 6.8}   $\pm$ 0.5 &      6.0 $\pm$   0.2 &                  5.9 $\pm$   0.2 \\
  30 &           3.9   $\pm$ 0.7 &      {\bf 4.4}  $\pm$  0.5 &      3.9 $\pm$   0.2 &                  3.9 $\pm$   0.1 \\
\bottomrule
\end{tabular}
	\caption{\footnotesize Performance on the adversarial attack with discrete features, with $Q=10$. DCNN is the single level discrete CNN, Bi-DCNN is the bilevel discrete CNN, CNN is the vanilla CNN, while  CNN* is the CNN where we add the bilevel discrete layer after vanilla training.}
	\label{tab:attack10}
	\vspace{-.6cm}
\end{table}

\begin{table*}[]
	\centering
	%	\small
	\footnotesize
	%\scriptsize
	%\tiny
	\begin{tabular}{rllllll}
		\toprule
%		 &	 \multicolumn{2}{c}{[12x12 maps]} &	  \multicolumn{2}{c}{  [18x18  maps]} &	 \multicolumn{2}{c}{  [24x24  maps] } \\			
		gradient &	 \multicolumn{2}{c}{accuracy  [12x12 maps]} &	  \multicolumn{2}{c}{accuracy  [18x18  maps]} &	 \multicolumn{2}{c}{accuracy  [24x24  maps] } \\
		type &  train  & {validation } & train  & {validation } &   train  & {validation  }  \\
%		12x12 & 18x18 & 24x24 &
		\midrule
		\bigrad(BB) &   {95.8} $\pm$ 0.2 &     {\bf94.5} $\pm$ 0.2 &  {\bf97.1} $\pm$ 0.0 &     {\bf96.4} $\pm$ 0.2 &   {98.0 }$\pm$ 0.0 &   {\bf97.8} $\pm$ 0.0 \\
		\bigrad(PT) &     91.7 $\pm$ 0.1 &     91.6 $\pm$ 0.1 &    94.3 $\pm$ 0.0 &     94.2 $\pm$ 0.1 &  95.7 $\pm$  0.0 &   95.6 $\pm$ 0.1 \\
		BB-1 &   95.9 $\pm$ 0.2 & 91.7 $\pm$ 0.1 &  96.7 $\pm$ 0.2 & 94.5 $\pm$ 0.1  &   97.1 $\pm$ 0.1 & 96.3 $\pm$ 0.2  \\
		PT-1 &  88.3 $\pm$ 0.2 & 87.5 $\pm$ 0.2 &  90.9 $\pm$ 0.4 & 90.6 $\pm$ 0.5 &   92.8 $\pm$ 0.1 & 92.8 $\pm$ 0.2  \\
		SL &   {\bf 100.0} $\pm$ 0.0 & 26.2 $\pm$ 2.4 &  {\bf 99.9} $\pm$ 0.1 & 20.2 $\pm$  0.5& {\bf99.1 }$\pm$ 0.2 & 14.0 $\pm$ 1.0 \\
		\bottomrule
	\end{tabular}
	\caption{\footnotesize Performance on the Dynamic Programming Problem with Interdiction. SL uses ResNet18.}
%	 with map tiles of size 12x12,18x18 and 24x24.}
	\label{tab:SP}
\end{table*}

\subsection{Adversarial ML with discrete latent variables} \label{sec:Robust}
Machine learning models are heavily affected by the injection of intentional noise \citep{madry2017towards,goodfellow2014explaining}. An adversarial attack typically requires access to the machine learning model, in this way the attack model can be used during training to include its effect.  
% so training could be design including the effect of the attack. 
Instead of training an end-to-end system as in \citep{goldblum2019adversarially}, where the attacker is aware of the model, we consider the case where the attacker can inject a noise at the feature level, as opposed to the input level (as in \citep{goldblum2019adversarially}), this allows us to model the interaction as a bilevel problem.
% and we model this interaction as a bilevel problem. 
Thus, to demonstrate the use of a bilevel layer, we design a system that
% incorporates the attack model and 
is composed of a feature extraction layer, followed by a discretization layer that operates on the space of $\{0,1\}^m$, where $m$ is the hidden feature size, followed by a classification layer. The network used in the experiments is composed of two convolutional layers with max-pooling and two linear layers, all with relu activation functions, while the classification is a linear layer.
% We further require the feature to be a discrete variable {0,1}k, where k is the feature size. We then use a standard architecture composed of two convolutional layers of size 6 and 16, and kernel of size 5, with max pool layer of size 2 and stride 2, followed by two full connected layers of size 2024 and a final linear layer. We used relu activation function and we add a discrete layer between the features and the final linear layer. 
We consider a more limited attacker that is not aware of the loss function of the model and does not have access to the full model, but rather only to the input of the discrete layer
%, as for example is the data is transmitted over a channel. 
and is able to switch $Q$ discrete variables, 
The interaction of the discrete layer with the attacker is described by the following bilevel problem:
% thus the discrete layer solves the following bilevel problem:
\begin{align} \label{eq:discretization_layer}
    \min_{ x \in Q} \max_{y  \in B} \langle z+x, y \rangle.
\end{align}
where $Q$ represents the sets of all possible attacks, $B$ is the budget of the discretization layer and $y$ is the output of the layer. 
% The discrete layer solves:
% 1.	A single level maximization problem (single-level)
% 2.	A bilevel optimization problem (bi-level)
% In the single level the optimization is to align with the output y to the extracted feature z (i.e. max<z,y>, where z are the features and y is the discrete output). This represents the case without attacker. The bilevel problem models the effect of the attacker.
% The bilevel problem is a min-max problem where the attacker (whose variable is x) is allowed to obscure Q positions, while the optimization can only activate at most B bits. In a loose formalism:
For the simulation, we compute the solution by sorting the features by values and considering only the first B values, while the attacker will obscure (i.e. set to zero) the first $Q$ positions.
% and set to one the last $Q$. 
The output $y$ thus will have ones on the $Q$ to $B$ non-zero positions, and zero elsewhere. We train three models, on CIFAR-10 dataset for $50$ epochs. For comparison we consider:1) the vanilla CNN network (i.e. without the discrete features); 2) the network with the single-level problem (i.e. the single-level problem without attacker) and; 3) the network with the bilevel problem (i.e. the min-max discretization problem defined in Eq.\ref{eq:discretization_layer}).
We then test the networks to adversarial attack using the PGD \citep{madry2017towards} attack similar to \citep{goldblum2019adversarially}. Similar results apply for FGSM attack (Fast Gradient Sign Attack) \citep{goodfellow2014explaining}. We also tested the network trained as a vanilla network, where we added the min-max layer after training.
From the results (Tab.\ref{tab:attack10}), we notice: 1) The min-max network shows improved resilience to adversarial attack wrt to the vanilla network, but also with respect to the max (single-level) network; 2)	The min-max layer applied to the vanilla trained network is beneficial to adversarial attack; 3) The min-max network does not significantly change performance in presence of adversarial attack at the discrete layer (i.e. between Q=0 and Q=10). This example shows how bilevel layers can be successfully integrated into a Machine Learning system as differentiable layers.

% TSP
\begin{table*}[!t]
% \begin{table*}[!ht]
	\centering
	\small
	\begin{tabular}{lrllrllrll}
		\toprule
		gradient & &  \multicolumn{2}{c}{accuracy}&  &  \multicolumn{2}{c}{accuracy} &  &  \multicolumn{2}{c}{accuracy} \\
		type &k  &  train  & {validation }   &  k &  train  & {validation }  &  k &  train  & {validation }  \\
		\midrule
		BB &  8 &   89.2 $\pm$  0.1 &     89.4  $\pm$  0.2 	&   10 &   91.9 $\pm$  0.1 &  {\bf 92.0} $\pm$  0.1 &   12 & 93.5 $\pm$  0.1 &     93.5 $\pm$ 0.2 \\
		PT  &  8 &   89.3 $\pm$  0.0 &     {\bf 89.4} $\pm$  0.1 &  10 &  92.0 $\pm$  0.0 &  91.9 $\pm$  0.1 &   12 &   {\bf 93.7} $\pm$  0.1 &     {\bf 93.7} $\pm$ 0.1 \\
		BB-1  & 8 & 84.0 $\pm$ 0.4 &  83.9 $\pm$ 0.4  	&   10 &   87.4  $\pm$ 0.3 &  87.5 $\pm$ 0.4 &   12 &    89.3 $\pm$ 0.1  &  89.3 $\pm$ 0.1  \\
		PT-1  & 8 &84.1 $\pm$ 0.4 &  84.1 $\pm$ 0.3 &   10 &   87.3 $\pm$ 0.3 &  87.0 $\pm$ 0.3  &   12 &    89.3 $\pm$ 0.0   &  89.5 $\pm$ 0.2  \\
		SL &  8 & {\bf94.2} $\pm$ 5.0  & 10.7 $\pm$ 3.9 &   10 &  {\bf 92.7} $\pm$ 5.4 &  9.4 $\pm$ 0.4  &   12 &  91.4 $\pm$ 2.3  &  9.3 $\pm$ 1.2  \\
		\bottomrule
	\end{tabular}
	\caption{\footnotesize Performance in terms of the accuracy of the TSP use case with interdiction. SL has higher accuracy during train but fails at test time. BB and PT are \bigrad variants.}
	\label{tab:TSP}
\end{table*}

\subsection{Dynamic Programming: Shortest path with Interdiction } \label{sec:SP}
We consider the problem of the Shortest Path with Interdiction, where the set of possible valid paths (see Fig.\ref{fig:SP_both}(a)) is $Y$ and the set of all possible interdiction is $X$. The mathematical problem can be written as
\begin{equation} \label{eq:SP}
\min_{y \in Y} \max_{x \in X} \langle z + x \odot w , y \rangle
\end{equation}
where $\odot$ is the element-wise product. This problem is multi-linear in the discrete variables $x,y,z$. 
%In this specific case the variables are associated with pixels of a Warcraft II map tile. 
\begin{figure}[!hbpt]
	\centering
	\subfigure[] {
	 \includegraphics[width=0.15\textwidth]{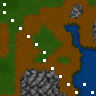}
	}
	\subfigure[] {
	\includegraphics[width=0.4\textwidth]{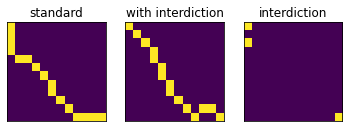}
	}
	\caption{ \footnotesize
		(a) Example Shortest Path in the Warcraft II tile set of \citep{guyomarchwarcraft}.
		(b) Example Shortest Path without (left) and with interdiction (middle). Even a small interdiction (right) has a large effect on the output.}
	\label{fig:SP_both}
	\vspace{-.6cm}
\end{figure}
The $z,w$ variables are the output of the neural network whose inputs are the Warcraft II tile images. The aim is to train the parameters of the weight network, such that we can solve the shortest path problem only based on the input image. For the experiments, we followed and adapted the scenario of \citep{poganvcic2019differentiation} and used the Warcraft II tile maps of \citep{guyomarchwarcraft}. 
We implemented the interdiction Game using a two-stage min-max-min algorithm \citep{kammerling2020oracle}. In Fig.\ref{fig:SP_both}(b) it is possible to see the effect of interdiction on the final solution. 
%We also note that the start and end positions will be most likely considered, since these are fixed in the experiment. 
Tab.\ref{tab:SP} shows the performances of the proposed approaches, where we allow for $B=3$ interdictions and  we used tile size of $12 \times 12$, $18 \times 18$, $24 \times 24$. The loss function is the Hamming and $\ell_1$ loss evaluated on both the shortest path $y$ and the intervention $x$. 
The gradient estimated using Eq.\ref{eq:discrete_implicit_single_merged} (BB) provides more accurate results, at double of computation cost of PT. The single-level BB-1 approach outperforms PT, but shares similar computational complexity, while single-level PT-1 is inferior to PT. As expected, SL outperforms other methods during training, but completely fails during validation. Bigrad improves over single-level approaches because includes the interaction of the two problems.

\begin{figure}[!hbpt]
	\centering
	\subfigure[] {
		\includegraphics[width=0.25\textwidth, trim = 0 0 0 .1cm,clip]{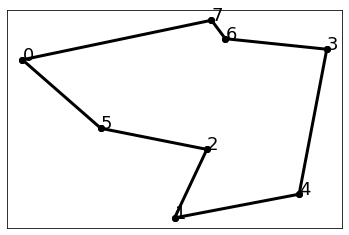}
	}
	\subfigure[] {
		\includegraphics[width=0.25\textwidth, trim = .1cm .1cm .1cm .1cm,clip]{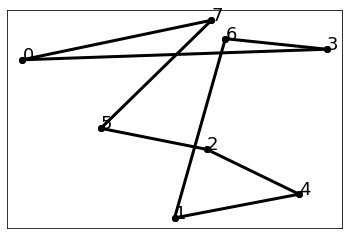}
	}
	\caption{\footnotesize Example of TSP with $8$ cities and the comparison of a TSP tour without (a) or with (b) a single interdiction. Even a single interdiction has a large effect on the final tour.}
	\label{fig:TSP}
	\vspace{-.3cm}
\end{figure}

\subsection{Combinatorial Optimization: Travel Salesman Problem (TSP) with Interdiction} \label{sec:TSP}
Travel Salesman Problem (TSP) with interdiction consists of finding the shortest route $y \in Y$ that touches all cities, where some connections $x \in X$ can be removed. 
%, where the input provided to the system is a subset of the total number of cities and we want to estimate the distance among cities. 
The mathematical problem to solve is given by
\begin{equation} \label{eq:TSP}
	\min_{y \in Y} \max_{x \in X} \langle z + x \odot w , y \rangle
\end{equation}
%where $Y$ represents the set of valid tours, while $X$ the valid interdiction, 
where $z,w$ are cost matrices for the salesman and interceptor.
%, and $y,x$ represent the actual tour and interdiction. {\color{blue} In this use case the $z,w$ variables presents the distance and the interdiction costs.}
Similar to the dynamic programming experiment, we  implemented the interdiction Game using a two-stage min-max-min algorithm \citep{kammerling2020oracle}. Fig.\ref{fig:TSP} shows the effect of a single interdiction. %solution of a TSP problem and the interdiction solution with a single interdiction. 
The aim is to learn the weight matrices, trained with the interdicted solutions on a subset of the cities. 
%We experimented with the Travel Salesman Problem (TSP), where the input provided to the system is a subset of the total number of cities and we want to estimate the distance among cities. The mathematical problem to solve is given by
%\begin{equation} \label{eq:TSP}
%\min_{y \in Y} \max_{x \in X} \langle z + x \odot w , y \rangle
%\end{equation}
%where $Y$ represents the set of valid tours, while $X$ the valid interdiction, $z,w$ are weight matrices, and $y,x$ represent the actual tour and interdiction. {\color{blue} In this use case the $z,w$ variables presents the distance and the interdiction costs.}
%Similar to the dynamic programming experiment, we  implemented the interdiction Game using a two stage min-max-min algorithm \citep{kammerling2020oracle}. Fig.\ref{fig:TSP} shows the solution of a TSP problem and the interdiction solution with a single interdiction. 
%{\color{blue} We notice that single and two level approaches perform similarly in the train and validation. This means the once the distance matrix and the interdiction parameters are learned the performance is similar in the two environments.Since the number of interdiction is limited the performance of the single level approach is not catastrophic, while the supervised learning approach completely fails in the validation set. Since the the PT approach has similar performance of the BB, this would be preferable in this scenario since it requires less computation resources. }
Tab.\ref{tab:TSP} describes the performance in terms of accuracy on both shortest tour and intervention. We use Hamming and $\ell_1$ loss function. We only allow for $B=1$ intervention but considered $k = 8, 10$ and $12$ cities from a total of $100$ cities.  
%{\color{blue} 
Single and two-level approaches perform similarly in the training and validation. 
%This means the once the distance matrix and the interdiction parameters are learned the performance is similar in the two environments. 
Since the number of interdiction is limited to one, the performance of the single-level approach is not catastrophic, while the supervised learning approach completely fails in the validation set. Bigrad thus improves over single-level and SL approaches. Since Bigrad(PT) has a similar performance of \bigrad(BB), thus PT is  preferable in this scenario, since it requires fewer computation resources. %}

\section{Conclusions}
\bigrad generalizes existing single-level gradient estimation approaches and is able to incorporate Bilevel Programming as a learnable layer in modern machine learning frameworks, which allows to model of conflicting objectives as in adversarial attack. The proposed novel gradient estimators are also efficient and the proposed framework is widely applicable to both continuous and discrete problems. 
The impact of \bigrad has a marginal or similar cost with respect to the complexity of computing the solution of the Bilevel Programming problems. 
We show how \bigrad is able to learn complex logic when the cost functions are multi-linear. 
%{\color{blue}  From the experiments, we notice that the choice of the gradient estimators depends on the complexity of the specific bilevel problem and the available computation resources. }

% Future work will investigate how to efficiently integrate the Bilevel solver in the forward pass in order to further improve computation time.

%, since the solver is called with similar input during the forward pass. 
% close to a previous configuration. 

% \begin{contributions} % will be removed in pdf for initial submission,
%                       % so you can already fill it to test with the
%                       % ‘accepted’ class option
% Hello
% \end{contributions}

% \begin{acknowledgements} % will be removed in pdf for initial submission,
%                          % so you can already fill it to test with the
%                          % ‘accepted’ class option
%     Briefly acknowledge people and organizations here.
% \end{acknowledgements}

% \clearpage
% \bibliography{uai2021-template}

\section*{Ethical Statement and Limitations}
The present work does not have ethical implications, but share with all other machine learning approaches the potential to be used in a large multitude of applications; we expect our contribution to be used for the benefit and progress of our society. Our approach models bilevel problems with both discrete and continuous variables, but we have not explored the mixed integer programming approach, with mixed variables. We rely on the use of existing solvers to compute the current solution, thus we leave it to the next work to explore the potential to accelerate solving bilevel problems.   

% You can write a statement about the potential ethical impact of your work, including its broad societal implications, both positive and negative. If included, such statement must be written in an unnumbered section titled \emph{Ethical Statement}.

% \bibliographystyle{unsrt}
% \bibliographystyle{acm}
% \bibliographystyle{abbrv}
% \bibliographystyle{splncs04}
% \bibliographystyle{plainnat}
% \bibliographystyle{aaai23}
% {\fontsize{8}{9}\selectfont \bibliography{bilevel}}
% {\fontsize{10}{10}\selectfont \bibliography{bilevel}}

% \small{
    \bibliography{bilevel}

\clearpage
\appendix
% \maketitle

\section{Supplementary Material; 
Implicit Bilevel Optimization: Differentiating through Bilevel Optimization Programming
}
\subsection{Extension for linear equalities and non-linear inequalities} \label{sec:linear_equality_and_nonlinear_inequality}
\subsubsection{Linear Equality constraints} \label{sec:linear_equality}
To extend the model of Eq.\ref{eq:bilevel_continous} to include linear equality constraints of the form $A x = b$ and $B y = c$ on the outer and inner problem variables, we use the following change of variables
%\begin{align}
%	x(r) = x_0 + A^ \perp r , ~~ & y(q) = y_0 + B^ \perp q,
%\end{align}
\begin{align}
	x \to x_0 + A^ \perp x , ~~ & y  \to y_0 + B^ \perp y,
\end{align}
where $A^ \perp,B^ \perp$ are the orthogonal space of $A$ and $B$, i.e. $A A^ \perp = 0,B B^ \perp = 0$, and $x_0,y_0$ are one solution of the equations, i.e. $A x_0 = b, By_0=c$. 
%The change of variables can be summarized in with the following change in functions:
%\begin{align}
%	f(x,y,z) & \to f(x_0 + A^\perp x,y_0 + B^ \perp,z), ~ g(x,y,z) \to g(x_0 + A^\perp x,y_0 + B^ \perp,z) 
%\end{align}

\subsubsection{Non-linear Inequality constraints}\label{sec:nonlinear_inequality}
Similarly, to extend the model of Eq.\ref{eq:bilevel_continous} when we have non-linear inequality constraints, we use the barrier method approach \citep{boyd2004convex}, where the variable is penalized with a logarithmic function to violate the constraints. Specifically, let us consider the case where $f_i, g_i$ are inequality constraint functions, i.e. $f_i < 0, g_i < 0$, for the outer and inner problems. We then define new functions
\begin{align}
	f \to t f -\sum_{i=1}^{k_x} \ln (- f_i), ~~ & g \to t g -\sum_{i=1}^{k_y} \ln (- g_i).
\end{align}
where $t$ is a variable parameter, which depends on the violation of the constraints. The closer the solution is to violate the constraints, the larger the value of $t$ is. 
%If the solution is close to the boundary, we would increase $t$, such that variations that approach the boundary are penalized. 
%When the underlying solver uses a barrier method, this value can be directly provided by the bilevel solver.  
% that $t \to \infty$, 

\subsection{Bilevel Cone programming} \label{sec:bilevel_cone} We show here how Theorem.\ref{th:bigrad_cont} can be applied to bi-level cone programming extending single-level cone programming results \citep{agrawal2019differentiating}, where we can use efficient solvers for cone programs to compute a solution of the bilevel problem \citep{ouattara2018duality}
\begin{subequations}\label{eq:bilevel_cone}
	\begin{align} 
		\min_{x} &~ c^Tx + (Cy)^T x \nonumber \\  
		 & ~ \text{s.t.} ~ Ax+z + R(y)(x-r) = b, ~ 
		s \in \mathcal{K} \\
		y \in & \arg  \min_{y } d^Ty + (Dx)^Ty \nonumber \\  
		& ~ \text{s.t.} ~  By+u + P(x) (y-p) = f, ~ 
		u \in \mathcal{K}
	\end{align}
\end{subequations}
In this bilevel cone programming, the inner and outer problem are both cone programs, where $R(y),P(x)$ represents a linear transformation, while $C,r,D,p$ are new parameters of the problem, while $\mathcal{K}$ is the conic domain of the variables. 
In the hypothesis that a local minima of Eq.\ref{eq:bilevel_cone} exists, we can use an interior point method to find such point. 
%We assume we have this minimal point and that is not a saddle point. 
To compute the bilevel gradient, we then use the residual maps \citep{busseti2019solution} of the outer and inner problems.  
% We used $y \tens{R} x = R_0 \otimes y^T R_1 x$ for generic linear transformation in $x$ and $y$.
Indeed, we can then apply Theorem \ref{th:bigrad_cont}, where $F = N_1(x,Q,y)$ and $G = N_2(y,Q,x)$ are the normalized residual maps defined in \citep{busseti2019solution,agrawal2019differentiable} of the outer and inner problems. 

% \subsection{Relationship with other related work}
% \paragraph{Predict then optimize}
% Predict then Optimize (two-stage) \citep{elmachtoub2017smart,ferber2020mipaal} or solving linear programs and submodular maximization from \citep{wilder2019melding} solve optimization problems when the cost variable or the minimization function is directly observable. In contrary, in our approach we only have access to a loss function on the output of the bilevel problem, thus allowing to use as a layer. 

\subsection{Proofs}
\begin{proof}[Proof of Linear Equality constraints]
Here we show that 
\begin{align}
	x(u) = x_0 + A^ \perp u 
\end{align}
includes all solution of $Ax=b$. First we have that $A A^ \perp = 0$ and $Ax_0 = b$ by definition. This implies that $Ax(u) = A(x_0 + A^ \perp u) = Ax_0 = b$. Thus $\forall u \to Ax(u) = b$. 
% If $x' : A x' = b, \nexists u: x' = x(u)$. 
The difference $x'-x_0$ belongs to the null space of $A$, indeed $A(x'-x_0) = Ax' - Ax_0 = b-b=0$. The null space of $A$ has size $n-\rho(A)$. If $\rho(A)=n$, where $A \in \R^{m \times n}, m \ge n$, then there is only one solution $x=x_0 = A^{\dagger}b$, $A^{\dagger}$ the pseudo inverse of $A$. If $\rho(A)<n$, then $\rho(A^ \perp)) = n - \rho(A)$ is a based of all vectors s.t. $Ax(u)=b$, since $\rho(A^ \perp)) = n - \rho(A)$ is the size of the null space of $A$. In fact $A^ \perp$ is the base for the null space of $A$. The same applies for $ y(v) = y_0 + B^ \perp v$ and $By(v) = c$.
\end{proof}

\begin{proof}[Proof of Theorem \ref{th:items}]
The second equation is derived by imposing the optimally condition on the inner problem. Since we do not have inequality and equality constraints we optimal solution shall equate the gradient w.r.t. $y$ to zero, thus $G=\nabla_y g = 0$. The first equation is also related to the optimality of the $x$ variable w.r.t. to the total derivative or hyper-gradient, thus we have that $0 = \dd_x f = \nabla_x f + \nabla_y f \nabla_x y$. In order to compute the variation of $y$, i.e. $\nabla_x y$ we apply the implicit theorem to the inner problem, i.e. $\nabla_x G + \nabla_y G \nabla_x y = 0$, thus obtaining $\nabla_x y = - \nabla^{-1}_y G \nabla_x G$.
\end{proof}

\begin{proof}[Proof of Theorem \ref{th:bigrad_cont}]
In order to prove the theorem, we use the Discrete Adjoin Method (DAM). 
Let consider a cost function or functional $L(x,y,z)$ evaluated at the output of our system. Our system is defined by the two equations $F=0, G=0$ from Theorem \ref{th:items}. Let us first consider the total variations: $\dd L, ~ \dd F =0 , ~ \dd G = 0$, where the last conditions are true by definition of the bilevel problem. When we expand the total variations, we obtain
\begin{eqnarray*}
\dd L &=& \nabla_x L \dd x + \nabla_y L \dd y + \nabla_z L \dd z \\
\dd F &=& \nabla_x F \dd x + \nabla_y F \dd y + \nabla_z F \dd z \\
\dd G &=& \nabla_x G \dd x + \nabla_y G \dd y + \nabla_z G \dd z 
\end{eqnarray*}
We now consider $\dd L + \dd F \lambda + \dd G \gamma  = [\nabla_x L + \nabla_x F \lambda  + \nabla_x G \gamma] \dd x + [\nabla_y L + \nabla_y F \lambda  + \nabla_y G \gamma ]\dd y + [\nabla_z L + \nabla_z F \lambda  + \nabla_z G \gamma ]\dd z$. We ask the first two terms to be zero to find the two free variables $\lambda,\gamma$: 
\begin{eqnarray}
\nabla_x L + \nabla_x F \lambda  + \nabla_x G \gamma &=& 0 \\
\nabla_y L + \nabla_y F \lambda  + \nabla_y G \gamma &=& 0
\end{eqnarray}
or in matrix form
$$
\begin{vmatrix}
		\nabla_x F & \nabla_x G\\
		\nabla_y F & \nabla_y F
\end{vmatrix} 
	\begin{vmatrix}
	\lambda \\
	 \gamma
\end{vmatrix} = -
	\begin{vmatrix}
	\nabla_x L  \\
	 \nabla_y L
\end{vmatrix}
$$
% $\dd_z L$
We can now compute the $\dd_z L = \nabla_z L + \nabla_z F \lambda  + \nabla_z G \gamma  $ with $\lambda, \gamma$ from the previous equation.
\end{proof}

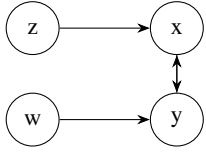
\begin{figure}[h]
    % \centering
    % \includegraphics{}
   \centering
   \begin{tikzpicture}[
      mycircle/.style={
         circle,
         draw=black,
         fill=white,
         fill opacity = 0.3,
         text opacity=1,
         inner sep=0pt,
         minimum size=20pt,
         font=\small},
      myarrow/.style={-Stealth},
      node distance= .5cm and 1.2cm
      ]
      \node[mycircle] (z) {z};
      \node[mycircle,below =of z] (w) {w};
      \node[mycircle,right =of z] (x) {x};
      \node[mycircle,below =of x] (y) {y};
    %   \node[inner sep=0pt, above left=of c2] (hammer) {A};
    %   \node[inner sep=0pt, above left = .1cm of color ] (hammer) {\includegraphics[width=.05\textwidth]{figures/hammer.pdf}};
    \foreach \i/\j in {% start node/end node/text/position
      z/x/,
    %   w/x/,
      x/y/,
      y/x/,
      w/y/
      }
      \draw [myarrow] (\i) -- node {} (\j);
    \end{tikzpicture} 
    \caption{Discrete Bilevel Variables: Dependence diagram}
    \label{fig:discrete_variables}    
\end{figure}

\begin{proof}[Proof of Theorem \ref{th:discrete}]
The partial derivatives are obtained by using the perturbed discrete minimization problems defined by Eqs.\ref{eq:discrete_basis}. We first notice that $\nabla_x \min_{y \in Y} \langle x,y \rangle = \arg  \min_{y \in Y} \langle x,y \rangle$. This result is obtained by the fact that $\min_{y \in Y} \langle x,y \rangle = \langle x,y^* \rangle$, where $y^* = \arg  \min_{y \in Y} \langle x,y \rangle $ and applying the gradient w.r.t. the continuous variable $x$; while Eqs. \ref{eq:discrete_perturbed} are the expected functions of the perturbed minimization problems. Thus, if we compute the gradient of the perturbed minimizer, we obtain the optimal solution, properly scaled by the inner product matrix. For example $\nabla_x \tilde{\Phi}_\eta = A x^*(z,y)$, with $A$ the inner product matrix. To compute the variation on the two-parameter variables, we have that 
$\dd L = \nabla_x L \dd x + \nabla_y L \dd y + \nabla_z L \dd z + \nabla_w L \dd w$ and that $\dd w/ \dd z = 0, \dd z/ \dd w = 0$ from the dependence diagram of Fig.\ref{fig:discrete_variables}
\end{proof}

\subsection{Gradient Estimation based on perturbation}
We can use the gradient estimator using the perturbation approach proposed in  \citep{berthet2020learning}. We thus have 
\begin{subequations}\label{eq:discrete_partial}
	\begin{align}
		\nabla_z x(z,y) &= A^{-1} \nabla_{z^2}^2 \tilde{\Phi}_\eta (z,y) \left.\right|_{\eta \to 0} \\
		\nabla_w y(w,z) &= C^{-1} \nabla_{w^2}^2 \tilde{\Psi}_\eta (w,z) \left.\right|_{\eta \to 0} \\
		\nabla_x y(x,w) &= D^{-1} \nabla_{x^2}^2 \tilde{\Theta}_\eta (x,w) \left.\right|_{\eta \to 0} \\
		\nabla_y x(z,y) &= B^{-1} \nabla_{y^2}^2 \tilde{W}_\eta (z,y) \left.\right|_{\eta \to 0} \\
		\nabla_z y &= \nabla_x y \nabla_z x
	\end{align}
\end{subequations}
and
\begin{subequations}\label{eq:discrete_perturbed}
	\begin{align}
		\tilde{\Phi}_\eta (z,y) &= \E_{u \sim U} \Phi (z + \eta u ,y)   \\
		\tilde{\Psi}_\eta (w,x) &=  \E_{u \sim U} \Psi (w + \eta u ,x) \\
		\tilde{\Theta}_\eta (x,w) &= \E_{u \sim U} \Psi (w  ,x + \eta u)  \\
		\tilde{W}_\eta (y,z) &= \E_{u \sim U} \Phi (z , y + \eta u )   
	\end{align}
\end{subequations}
, while
\begin{subequations}\label{eq:discrete_basis}
	\begin{align}
		\Phi (z,y) &=  \min_{x \in X} \langle z,x\rangle_A + \langle y,x\rangle_B \\
		\Psi (w,x) &=  \min_{y \in Y} \langle w,y\rangle_C + \langle x,y\rangle_D 
	\end{align}
\end{subequations}
which are valid under the conditions of \citep{berthet2020learning}, while $\tau$ and $\mu$ are hyper-parameters.

\subsection{Alternative derivation}\label{sec:alternative}
Let consider the problem $\min_{x\in K} \langle z,x \rangle_A$ and let us define $\Omega_x$ a penalty term that ensures $x \in K$. We can define the generalized lagragian $\mathbb{L}(z,x,\Omega) = \langle z,x \rangle_A + \Omega_x$. One example of $\Omega_x = \lambda^T|x-K(x)|$ or $\Omega_x = -\ln{|x-K(x)|}$ where $K(x)$ is the projection into $K$. To solve the Lagragian, we solve the unconstrained problem $\min_x \max_{\Omega_x} \mathbb{L}(z,x,\Omega_x)$. At the optimal point $\nabla_x \mathbb{L} = 0$. Let us define $F=\nabla_x \mathbb{L} = A^Tz+\Omega_x'$, then $\nabla_x F = \Omega_x''$ and $\nabla_z F = A^T$. 
If we have $F(x,z)=0$ and a cost function $L(x,z)$, we can compute $\dd_z L = \nabla_z L - \nabla_x L \nabla_x^{-1}F \nabla_z F$. 
Now $F(x,z,\Omega_x)=0$, we can apply the previous result and $\dd_z L = \nabla_z L -\nabla_x L \Omega_x''^{-1} A^T$. If we assume $\Omega_x'' = I$ and $\nabla_z L=0$, then $\dd_z L = - A \nabla_x L$.

\subsection{Memory Efficiency}
For continuous optimization programming, by separating the computation of the solution and the computation of the gradient around the current solution we 1) compute the gradient more efficiently, in particular, we compute second order gradient taking advantage of the vector-jacobian product (push-back operator) formulation without explicitly inverting and thus building the jacobian or hessian matrices; 2) use more advanced and not differentialble solution techniques to solve the bilevel optimization problem that would be difficult to integrate using automatic differentiable operations.
Using VJP we reduce memory use from $O(n^2)$ to $O(n)$. Indeed using an iterative solver, like generalized minimal residual method (GMRES) \citep{saad1986gmres}, we only need to evaluate the gradients of Eq.\ref{eq:bigrad_continuous} and not invert the matrix neither materialize the large matrix and computing matrix-vector products. Similarly, we use Conjugate Gradient (CG) method to compute Eq.\ref{eq:bilevel_continous_items}, which requires to only evaluating the gradient at the current solution and nor inverting neither materializing the Jacobian matrix. 
An implementation of a bilevel solver would have a memory complexity of $O(Tn)$, where $T$ is the number of iterations of the bilevel algorithm. 

\subsection{Experimental Setup and Computational Resources}
For the  Optimal Control with adversarial disturbance, we follow a similar setup of \citep{agrawal2019differentiable}, where we added the adversarial noise as described in the experiments. For the Combinatorial Optimization, we follow the setup of \citep{poganvcic2019differentiation}. The dataset is generated by solving the bilevel problem on the same data of \citep{poganvcic2019differentiation}. For section \ref{sec:SP}, we use the warcraft terrain tiles and generate optimal bilevel solution with the correct parameters $(z,w)$, where $z$ is the terrain transit cost and $w$ is the interdiction cost, considered constant to $1$ in our experiment. $X$ is the set of all feasible interdictions, in our experiment we allow the maximum number of interdictions to be $B$.
For section \ref{sec:TSP}, on the other hand the $z$ represents the true distances among cities and $w$ a matrix of the interdiction cost, both unknown to the model. $X$ is the set of all possible interdictions.
In these experiments, we solved the bilevel problem using the min-max-min algorithm \cite{kammerling2020oracle}.
For the Adversarial Attack, we used two convolutional layers with max-pooling, relu activation layer, followed by the discrete layer of size $m=2024$, $B=100$, $Q=0,10$. A final linear classification layer is used to classify CIFAR10. We run over $3$ runs, $50$ epochs, learning rate $lr=3e-4$ and Adam optimizer.
Experiments were conducted using a standard server with 8 CPU, 64Gb of RAM and GeForce RTX 2080 GPU with 6Gb of RAM. 

\subsection{Jacobian-Vector and Vector-Jacobian Products}
The Jacobian-Vector Product (JVP) is the operation that computes the directional derivative $J_f(x)u$, with direction $u \in \R^m$, of the multi-dimensional operator $f: \R^m \to \R^n$, with respect to $x \in \R^m$, where $J_f(x)$ is the Jacobian of $f$ evaluated at $x$. On the other hand, the Vector-Jacobian product (VJP) operation, with direction $v \in  \R^n$, computes the adjoint directional derivative $v^TJ_f(x)$. JVP and VJP are the essential ingredient for automatic differentiation \cite{elliott2018simple, baydin2018automatic}.

\end{document}